%% file: arxiv_version.tex
\documentclass{article}

\usepackage[preprint]{neurips_2026}

\usepackage[utf8]{inputenc} 
\usepackage[T1]{fontenc}    
\usepackage{url}            
\usepackage{amsfonts}       
\usepackage{nicefrac}       
\usepackage{microtype}      

\usepackage{afterpage}
\usepackage{enumerate}
\usepackage{bm}
\usepackage{multirow}
\usepackage{diagbox}
\usepackage{amsmath,amssymb,mathrsfs}
\usepackage{natbib}
\usepackage{bbm}
\usepackage{makecell}
\usepackage{upgreek}
\usepackage{mathtools}
\usepackage{xcolor}

\usepackage{dsfont,chngcntr}
\usepackage{smile}
\usepackage{stfloats,booktabs}
\usepackage{setspace}
\usepackage[english]{babel}
\usepackage{graphicx} 
\usepackage{wrapfig}
\usepackage{subcaption}
\usepackage{etoc}
\usepackage[colorlinks,
linkcolor=blue,
anchorcolor=blue,
citecolor=blue
]{hyperref}

\usepackage[capitalize,noabbrev]{cleveref}

\newtheorem{theorem}{Theorem}[section]

\newtheorem{corollary}[theorem]{Corollary}

\theoremstyle{definition}
\newtheorem{definition}[theorem]{Definition}
\newtheorem{assumption}[theorem]{Assumption}

\theoremstyle{remark}
\newtheorem{remark}[theorem]{Remark}

\def\tr{\mathop{\text{tr}}\kern.2ex}

\def\supp{\mathop{\text{supp}}}

\long\def\comment#1{}

\def\tr{\mathop{\text{Tr}}}

\newcommand{\bel}{\begin{eqnarray}\label}
\newcommand{\eel}{\end{eqnarray}}
\newcommand{\bes}{\begin{eqnarray*}}
	\newcommand{\ees}{\end{eqnarray*}}

\let\tilde\widetilde
\newlength{\tocrulewidth}
\def\mid{\,|\,}

\def\EE{{\mathbb E}}

\def\supp{\mathop{\text{supp}\kern.2ex}}

\def\supp{\mathop{\text{supp}}}

\def\tr{\mathrm{Tr}}

\title{Data-Constrained Language Model Pretraining: Improved Regularization and Scaling Laws}

\author{Zhiwei Xu\textsuperscript{$1$}, 
Shihao Wu\textsuperscript{$1$}, 
Hanseul Cho\textsuperscript{$2$}, 
Wei Hu\textsuperscript{$1$}\thanks{Equal advising.} , 
Yixin Wang\textsuperscript{$1$}$^\ast$
\\
\textsuperscript{$1$}University of Michigan,
\textsuperscript{$2$}KAIST AI
\\
\texttt{\{zhiweixu,wshihao,vvh,yixinw\}@umich.edu,jhs4015@kaist.ac.kr}
}
\date{}

\begin{document}

\maketitle

\etocsettocdepth.toc{none}

\begin{abstract}
Classical scaling laws for language model pretraining balance model size against training dataset size under a fixed compute budget, assuming abundant data and a single pass over the corpus. As training compute grows faster than the supply of natural language data, pretraining is likely to enter a data-constrained, compute-rich regime where models train for multiple epochs over a finite dataset. We study data-constrained pretraining along two axes, regularization and scaling. For regularization, we study masked-input regularization (MIR), an auxiliary next-token prediction loss on randomly masked inputs. MIR tests whether the random masking central to diffusion language models can benefit autoregressive pretraining without architectural changes or inference overhead. Across 72M to 1.4B parameter models, we find that MIR added on top of strong weight decay improves validation loss over autoregressive strong-weight-decay-only models, with downstream gains at 1.4B. For scaling, we propose SoftQ, a scaling law that couples model size and data size to capture their interaction under repeated data. Classical alternatives such as the Chinchilla law use an additive form that decouples these terms, making them misspecified in the data-constrained regime. We find that SoftQ fits data-constrained experiments substantially better than these alternatives, and estimates MIR's gains as equivalent to roughly 1.3 times as much unique training data. We release our code at \url{https://github.com/yixinw-lab/dc_pretrain}.
\end{abstract}

\section{Introduction}

Scaling laws \citep{kaplan2020scaling, hoffmann2022training} are widely used to choose model size and training-token budget for large language model pretraining. Classical scaling laws are largely compute-centric: they study how to allocate a fixed compute budget between parameters and tokens, assuming that unique training data can scale freely with compute. In this abundant-data setting, pretraining is typically performed with a single pass over a large corpus.

However, training compute is growing faster than the supply of natural language data \citep{villalobos2022, epoch2024, CommonCrawl2025}, 
making data-constrained, compute-rich pretraining increasingly important. In this regime, the unique dataset is fixed, and additional compute is spent on larger models and multiple passes over the same corpus. Prior work has begun to study this setting:
\citet{niklas2023} tuned data repetition while fixing weight decay to 0.1 and proposed scaling laws based on effective resources that saturate with repetitions and excess parameters; \citet{kim2026pretraining} further showed that large weight decay is critical for preventing overfitting.

This shift raises two linked questions. The first concerns regularization: how can models avoid overfitting when compute increases but unique data does not? Prior work points to strong weight decay as one answer. A second possibility comes from masked diffusion language models (dLLMs), which typically use the same transformer architecture as autoregressive (AR) models but train by predicting randomly masked tokens. Under identical hyperparameters, dLLMs achieve lower validation loss than AR transformers in the data-constrained regime \citep{ni2025diffusion, prabhudesai2025diffusion}, suggesting that random masking may itself act as a form of regularization. However, these comparisons do not isolate masking from regularization strength: the dLLM advantage may be complementary to strong weight decay, or it may largely reflect insufficiently strong regularization in the AR baseline. This motivates our first question: how do random masking and weight decay interact, and how much does each contribute on top of the other?

\begin{figure*}[t]
    \centering
    \begin{subfigure}[t]{0.46\textwidth}
        \centering
        \includegraphics[width=\linewidth]{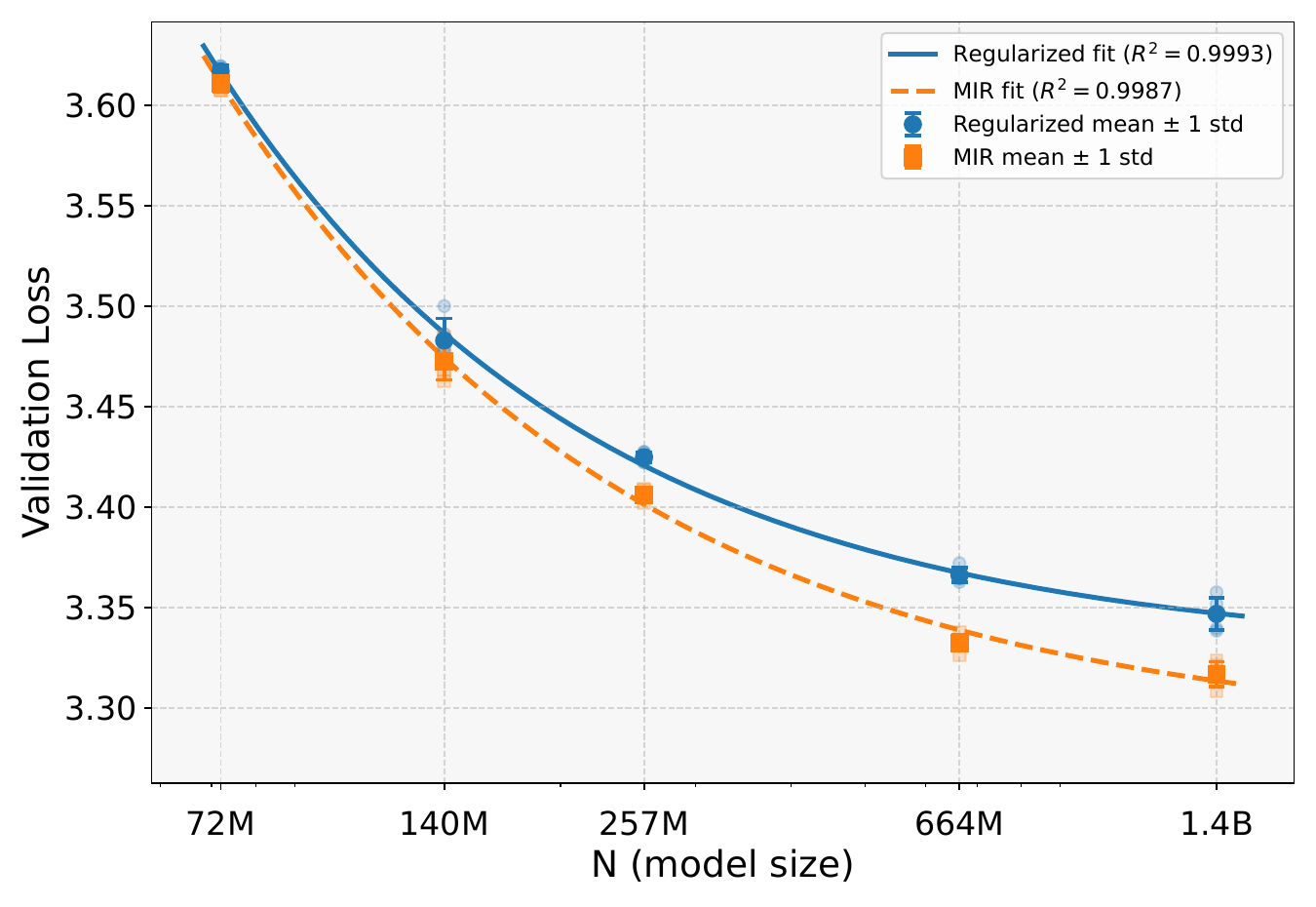}
        \caption{MIR improves a strong AR baseline.}
        \label{fig:dclm100m}
    \end{subfigure}\hfill
    \begin{subfigure}[t]{0.485\textwidth}
        \centering
        \includegraphics[width=\linewidth]{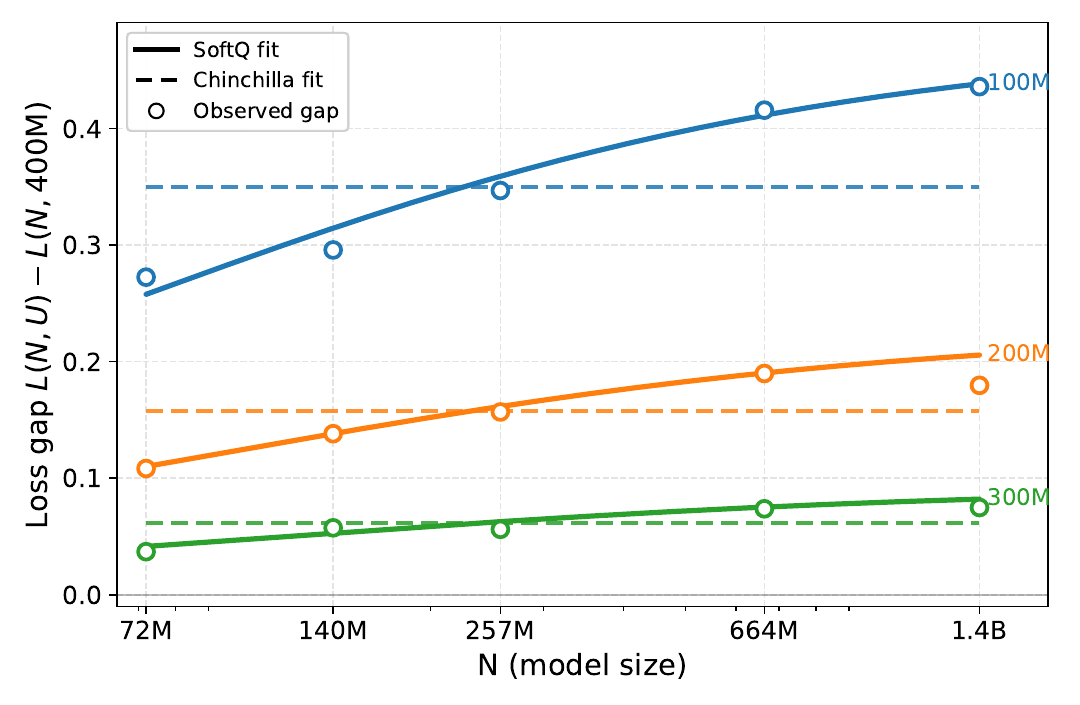}
        \caption{SoftQ captures data--model coupling.}
        \label{fig:softq-gap-overview}
    \end{subfigure}
    \caption{\textbf{Overview of the main results.}
    Left: On DataComp-LM (DCLM) dataset \citep{dclm24} with \(100\mathrm{M}\) unique training tokens, MIR improves validation loss over the
    strongly regularized autoregressive baseline across model sizes. Points
    show means over five random seeds, error bars show one standard deviation,
    and faint markers show individual runs. Right: On the
    strongly regularized baseline grid, we plot the loss gap
    \(L(N,U)-L(N,400\mathrm{M})\) for unique data budget
    \(U\in\{100\mathrm{M},200\mathrm{M},300\mathrm{M}\}\). Chinchilla predicts
    a model-size-invariant gap for each \(U\), while SoftQ tracks the empirical
    fan-out: the penalty from limited unique data grows with model size.}
    \label{fig:overview}
    \vspace{-10pt}
\end{figure*}

The second question concerns scaling: what loss law describes the data-constrained, compute-rich regime? Chinchilla-style laws were fit to single-pass, abundant-data training and may not capture the validation-loss surface when unique data, rather than compute, is the binding resource. In particular, their additive form predicts that the loss gap between two unique-data budgets should be independent of model size. In this paper, we study both questions in the data-constrained, compute-rich regime.

\textbf{Finding 1: Random masking provides regularization complementary to strong weight decay.} We first ask how the two regularization mechanisms interact. We find that strong weight decay is not specific to AR pretraining: applying the AR-tuned weight decay to dLLMs substantially lowers their validation loss, and once both models are strongly regularized, their validation losses become comparable across the model sizes we study. Given that strong weight decay alone provides such substantial regularization, this makes it unclear whether random masking can still provide additional benefit once strong weight decay is already in use.

To isolate this effect, we study \emph{masked-input regularization} (MIR), a minimal modification to standard AR pretraining. Let $x$ denote a clean sequence and $\tilde{x}$ a randomly masked version of the same sequence. Instead of optimizing only the standard next-token prediction loss $\mathcal{L}_{\mathrm{NTP}}(x)$, MIR optimizes
\[
\mathcal{L} = \mathcal{L}_{\mathrm{NTP}}(x) + \lambda \mathcal{L}_{\mathrm{NTP}}(\tilde{x}).
\]
Thus, the model trains on both clean and masked inputs, using the masked-input loss as an auxiliary regularizer. MIR requires no architectural changes and preserves standard autoregressive decoding at inference. Although it increases training compute, our setting is data-constrained and compute-rich, so we study MIR as a way to improve loss at a fixed unique-data budget, i.e., data efficiency rather than compute efficiency.

Across models from 72M to 1.4B parameters trained on DCLM \citep{dclm24} and Stack-V2 \citep{stackv224}, MIR consistently improves validation loss on top of strong weight decay (Figure~\ref{fig:dclm100m}). At 1.4B parameters, it also yields substantial downstream gains, including +10.2 points on BoolQ and +2.2 points on SciQ.

\textbf{Finding 2: Chinchilla is misspecified in the data-constrained, compute-rich regime; a coupled scaling law fits better.}
To quantify how much unique data MIR is worth, we extend our experiments across five model sizes and four unique-data budgets and fit several scaling laws. The additive Chinchilla form \citep{hoffmann2022training} fits poorly in this regime: it predicts that the validation-loss gap between two data budgets is independent of model size, whereas our experiments show that this gap grows with model size (Figure~\ref{fig:softq-gap-overview}).

We propose the \emph{SoftQ scaling law}, a five-parameter form that couples model size and data size through a soft bottleneck motivated by the skill-learning view of scaling laws \citep{quanta-scalinglaw}. SoftQ achieves better in-sample fit and out-of-sample prediction than Chinchilla, Quanta \citep{quanta-scalinglaw}, and Muennighoff-style \citep{niklas2023} laws on our dataset. The same ranking holds on an independent dataset from \citet{kim2026pretraining}. Using SoftQ as the baseline scaling law, we estimate MIR's gain over the strongly regularized baseline to be equivalent to roughly $1.3\times$ as much unique training data at the 200M--400M token budgets.

\textbf{Contributions.} We summarize our contributions as follows: (i) We show that large
weight decay substantially improves dLLMs in the data-constrained
regime, and that random masking further improves strongly
regularized AR models. Building on this observation, we propose MIR, a minimal
recipe that augments strongly regularized AR pretraining with an
auxiliary masked-input next-token loss; we estimate MIR to be worth
roughly $1.3\times$ as much unique training data at the 200M to 400M
token budgets. (ii) We show that additive Chinchilla-style scaling
laws do not fit the data-constrained, compute-rich regime, and propose
SoftQ, a five-parameter scaling law that couples model and data size
and substantially outperforms these alternatives.

\section{Setup: Data-Constrained Autoregressive and Masked Pretraining}
\subsection{Data-Constrained and Compute-Rich Pretraining}
\label{subsec:background_pretrain}

Let \(N\) denote the number of model parameters, \(U\) the number of unique
pretraining tokens, \(N_E\) the number of epochs over those tokens, and
\(D = U N_E\) the total number of training tokens. For a standard dense decoder-only transformer trained with next-token prediction, the
training compute is approximately
\(
C(N,D) \approx 6ND.
\)

Classical compute-optimal scaling laws \citep{kaplan2020scaling,hoffmann2022training} model evaluation loss as a function of
model size and training-token budget. In the abundant-data regime, the processed
tokens can be treated as fresh samples, so the distinction between unique tokens
and repeated tokens is not explicit. The standard compute-allocation problem is
\[
(N^\star(C),D^\star(C))
=
\arg\min_{N,D} L_{\mathrm{eval}}(N,D)
\quad
\mathrm{s.t.}
\quad
C(N,D)=C .
\]
For example, Chinchilla-style parametric scaling writes
\(
\widehat L(N,D)
=
E + A N^{-\alpha} + B D^{-\beta},
\)
and then chooses the point on this surface that minimizes loss under the
training-compute constraint. Such laws are highly effective when new data is
available, but they do not distinguish a token budget \(D\) consisting of fresh
tokens from the same budget obtained by repeatedly training on a finite corpus.

In data-constrained, compute-rich pretraining, the unique-token budget \(U\) is fixed or
bounded, and \(C\) is unbounded. Additional training compute can be spent by increasing the number
of epochs, increasing model size, or changing regularization. 
Prior work studies
several versions of this problem. \cite{niklas2023} model repeated data under
compute constraints by replacing raw token and parameter counts with effective
resources that saturate as repetitions and excess parameters grow. \cite{kim2026pretraining}
study a more compute-rich setting in which the unique data is fixed and the
training recipe is tuned to estimate the best attainable loss at each model
scale.

We follow the compute-rich perspective. For a fixed architecture family,
optimizer class, data distribution, and evaluation protocol, define the
optimized validation-loss envelope
\[
L^\star(N,U)
=
\inf_{h\in\mathcal H}
L_{\mathrm{eval}}(N,U;h),
\]
where \(h\) includes the tunable training hyperparameters, such as the number of
epochs, learning-rate schedule, weight decay, and other regularization choices.
In this formulation, \(D=U N_E(h)\) determines the compute used by a particular
training run, but compute is not the binding constraint used to define
\(L^\star\). The goal is therefore to model the joint dependence of the
best-achievable loss on model size \(N\) and unique data size \(U\).

\vspace{-10pt}

\subsection{Autoregressive and Masked Diffusion Language Models}

Let \(p_{\theta}\) denote the transformer model and \(\{x_i\}_{i=1}^n\) the training dataset, 
where each sample \(x_i = [x_{i,0}, x_{i,1}, \dots, x_{i,T-1}]\) is a sequence of length \(T\).
Autoregressive models predict tokens from left to right. The training objective \(\mathcal{L}_{\mathrm{NTP}}\) is
\(
-\sum_{i=1}^n \sum_{t=0}^{T-1} \log p_{\theta}(x_{i, t} \mid x_{i, < t}) / (nT).
\)
For each sequence \(x_i\), dLLMs sample a mask ratio \(r_i \sim \text{Unif} (0, 1]\), 
and use a Bernoulli random variable \(\text{Bern}(r_i)\) to decide whether to mask the token \(x_{i,t}\) or not 
for each position \(t \in [0, T)\). The model only predicts the true tokens at those masked positions.
The training objective is
\[
-\frac{1}{n T}\sum_{i=1}^n\Big[\frac{1}{r_i}\sum_{t=0}^{T-1} \mathbb{I} (\tilde{x}_{i,t} = \text{MASK}) \log p_{\theta}(x_{i,t} \mid \tilde{x}_i) \Big],
\]
where \(\tilde{x}_i\) represents the masked sample \(x_i\).

\begin{wrapfigure}[18]{R}{0.46\textwidth}
    \centering
    \vspace{-8pt}
    \includegraphics[width=\linewidth]{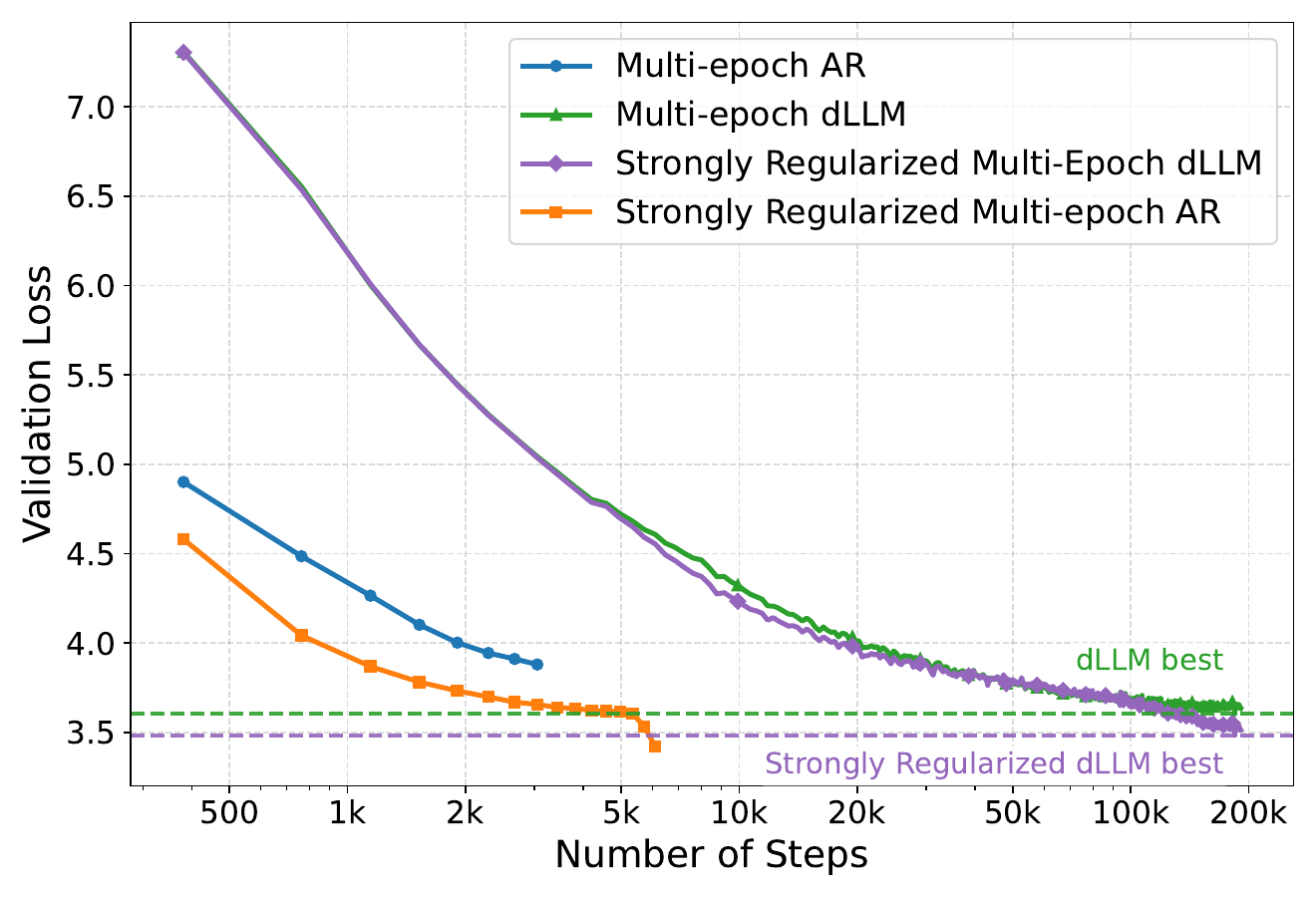}
    \caption{Validation Loss dynamics on DCLM 100M for the 257M model. Large weight decay substantially improves both multi-epoch AR and dLLM training; with both well regularized, their validation losses become comparable.}
    \label{fig:dllm_vs_AR_vs_strong_regularized}
\end{wrapfigure}

\section{Regularization in the Data-Constrained, Compute-Rich Regime}
\label{sec:mir}
\subsection{Weight Decay Transfers Across AR and dLLM Pretraining}
Recent studies report that dLLMs outperform AR models in the data-constrained regime \citep{ni2025diffusion, prabhudesai2025diffusion}, using weight decay \(\mathrm{wd}=0.1\) for both. Independently, \citet{kim2026pretraining} showed that large weight decay is critical for AR pretraining in this regime. We ask whether this benefit transfers to dLLMs and re-examine the AR–dLLM comparison under matched large-weight-decay treatment. On DCLM with 100M unique tokens, we compare four recipes at three model sizes (140M, 257M, and 664M): (i) Multi-epoch AR (\(\mathrm{wd}=0.1\), tuned epochs); (ii) Multi-epoch dLLM (\(\mathrm{wd}=0.1\), per \citet{prabhudesai2025diffusion}); (iii) Strongly Regularized AR with epochs, learning rate, and weight decay jointly tuned following \citet{kim2026pretraining}; (iv) Strongly Regularized dLLM, which inherits the AR-tuned weight decay but keeps other hyperparameters at \citet{prabhudesai2025diffusion} defaults. We report final-step validation loss for AR and the best across-epoch loss for dLLM.

Figure~\ref{fig:dllm_vs_AR_vs_strong_regularized} shows all four recipes at 257M. With \(\mathrm{wd}=0.1\), we reproduce the finding that dLLM (\(3.60\)) outperforms multi-epoch AR (\(3.88\)). 
Large weight decay dramatically improves both: it reduces AR loss to \(3.42\) and, when ported to dLLM, reduces dLLM loss to \(3.48\). 
Since dLLM validation loss is the negative evidence lower bound (an
upper bound on the negative log-likelihood) while AR loss is exact negative
log-likelihood, the slightly higher dLLM validation loss does not imply worse
performance. The two strongly regularized recipes have losses comparable at 140M, 257M, and 664M
(see Table~\ref{tab:dclm100m_dllm_comparison} in Appendix~\ref{app:exp_details}), implying that the previously reported
AR–dLLM gap is largely explained by insufficient AR regularization. Still, the fact that
dLLMs avoid the repeated-epoch collapse seen in weakly regularized AR
suggests that random input masking acts as an implicit regularizer in its own
right. We next ask whether this masking signal can contribute additional gains
on top of strong weight decay when added to AR training.

\subsection{Masked Input Regularization}
To capture this hypothesized benefit without abandoning the efficiency of standard AR decoding,
we study masked-input regularization (MIR). The method samples a mask ratio \(r\) from a uniform distribution 
\(\text{Unif}(r_{\min}, r_{\max})\) for each input sequence \(x\). 
At each position \(t \in [0, T-1]\), a Bernoulli random variable with success probability \(r\) determines 
whether to replace the token \(x_t\) with a specialized [MASK] token. 
Let \(\tilde{x}\) denote this corrupted sequence. Without altering the model architecture, MIR adds an auxiliary next-token prediction loss on the masked sequence:
\[
\mathcal{L} = \mathcal{L}_{\mathrm{NTP}}(x) + \lambda \mathcal{L}_{\mathrm{NTP}}(\tilde{x}).
\]
It requires two forward passes to calculate the training loss for each batch.
MIR therefore increases per-step training compute. Because our focus is the data-constrained, compute-rich regime, we use MIR to study whether additional compute can improve loss at a fixed unique-data budget, rather than as a compute-efficiency method. See tuning details and regularization coefficient in Appendix \ref{subsec:mir_coeff_tuning}.
\begin{remark}
We study the data-constrained, compute-rich regime defined in Section~\ref{subsec:background_pretrain}, where unique data rather than compute is the binding resource, so we focus on improving data efficiency rather than compute efficiency. We further quantify MIR's gain in data efficiency in Section~\ref{sec:mir-data-efficiency}.
\end{remark}

\subsection{Theoretical Intuition: Reducing Memorization via Masking}\label{sec:theo:intuition}

We provide intuition for how masking improves validation loss by analyzing a toy context-specific noise model in
Appendix \ref{sec:theory}. This model decomposes each sequence into three parts:
a context-specific component that enables memorization and acts as noise for
generalization, a generalizable component that 
contains predictive features, and an output token to be predicted from the first
two components. Under the data-constrained, compute-rich regime, we establish the
following dynamic.

\textbf{Theorem (Informal).} \textit{Under the context-specific noise model in the
data-constrained, compute-rich regime, standard autoregressive pretraining can
minimize training loss by relying almost entirely on the context-specific component, thereby
memorizing patterns that do not generalize to unseen examples. In contrast, MIR
regularizes the model's dependence on the context-specific components and
encourages it to learn predictive patterns on the generalizable components, 
thereby strictly improving validation loss. Moreover, for a fixed data size,
this improvement increases as model capacity grows.}

This informal theorem illustrates that MIR improves validation loss by reducing
the model's dependence on context-specific noise and encouraging it to learn
generalizable predictive features. We provide the formal model definition, assumptions,
theorem statement, and proofs in Appendix \ref{sec:theory}.

\subsection{Empirical Results}
\label{sec:mir_exp}

We evaluate MIR against the strongly regularized AR model baseline along four axes: scaling behavior on natural language, whether the gain transfers to coding data, where the gain comes from, and whether it translates to downstream tasks. Throughout, the only difference between MIR and the baseline is the auxiliary masked-input loss;
architecture, optimizer, and the per-cell-tuned
$(\text{epochs}, \text{weight decay}, \text{learning rate})$ configuration are
held fixed across the two recipes.

We find the following.
(1) On DCLM with 100M unique tokens, MIR reduces validation loss at every model
scale from 72M to 1.4B and on every matched random seed, with the average gain
growing from roughly $0.006$ at 72M to about $0.03$ at 1.4B. This trend is consistent with
the theoretical prediction in Section~\ref{sec:theo:intuition} that
overparameterized models benefit more from masking-based regularization.
(2) The benefit is not specific to natural language: with hyperparameters tuned
only on DCLM, MIR also reduces validation loss at all five model sizes on the
code-heavy Stack-V2 dataset.
(3) A token-level analysis on the 1.4B model shows that MIR's gain comes
from a broad set of validation positions rather than a few outliers. At the
positions where MIR most outperforms the baseline, the true next token is
itself usually a common one such as a function word or punctuation, and what
makes prediction difficult is the preceding context: rare names, mixed scripts,
broken word pieces, or noisy web text (Section~\ref{subsec:token_gap_main}).
(4) The loss improvement directionally transfers to downstream
tasks: the 1.4B MIR model outperforms the strongly regularized baseline on six
of eight zero-shot metrics, including $+10.2$ points on BoolQ and $+2.2$ points
on SciQ.

\textbf{Experimental setup.} To evaluate masked-input regularization, we train models on two distinct data distributions: 
standard natural language from DCLM \citep{dclm24} and code-heavy text from Stack-V2 \citep{stackv224}. 
For both datasets, the pretraining budget is fixed to 100M unique seed tokens and 10M tokens are reserved for validation.
We tune hyperparameters only on DCLM data and test whether the improvement from MIR still exists on the Stack-V2 dataset.

We build a scaling ladder with five model sizes:
\[
\text{ScalingLadder}(k) = (kW_1, kL_1, S_1, B_1),
\]
where \(W_1=1024\) is the embedding dimension when \(k=1\), \(L_1=12\) 
is the number of layers when \(k=1\), \(S_1 = 2048\) is the sequence length, 
\(B_1 = 128\) is the total batch size, and  $k \in \{0.5, 0.75, 1, 1.5, 2\}$. 
Across the scaling ladder, the attention head dimension is fixed at 64, while the depth, embedding dimension, MLP
dimension, and number of attention heads increase with scale. The model size ranges from \(72\)M to \(1.4\)B.
We use a Llama-style decoder-only transformer and use the same model architecture for all experiments. AdamW optimizer is used for all experiments. 
We use grid search to select the number of training steps, learning rate, and weight decay for each model.
See Appendix \ref{app:exp_details} for details on the optimizer, model architecture, and hyperparameter search.

\textbf{Validation loss improvements.} Figure~\ref{fig:dclm100m} visualizes validation loss across the scaling ladder for the DCLM 100M dataset, averaged over five random seeds. 
MIR improves validation loss over the strongly regularized baseline for every matched seed at every model scale. 
On the 1.4B parameter model, for example, MIR reduces the mean validation loss from 3.347 to 3.317. The average gain grows from roughly 0.006 loss at 72M parameters to about 0.03 loss for the two largest models, suggesting that MIR is especially useful when model
capacity is high relative to the amount of unique training data. This trend is qualitatively consistent with our theoretical analysis, which predicts that larger overparameterized models are more prone to overfitting, and therefore benefit more from masking-based
regularization.

Crucially, this regularization benefit generalizes beyond standard natural language. 
We repeat the same 100M token experiments to evaluate performance on code-heavy data: 
on Stack-V2, MIR reduces validation loss at all five model sizes, with absolute gains from 0.008 to 0.020 loss; see full numbers in Table \ref{tab:stack_v2_losses} in the Appendix.

\textbf{Where MIR helps: Token-level analysis.}
\label{subsec:token_gap_main}
To localize where the validation-loss gain comes from, we compare the 1.4B regularized baseline and the 1.4B MIR model on the \(10\)M DCLM eval dataset. 
For each position \(t\), we compute the negative log-likelihood on the true target \(y_t=x_{t+1}\) and define the token-level loss gap as
\(
\Delta \ell_t = \ell_{\mathrm{base}}(t) - \ell_{\mathrm{MIR}}(t),
\)
so that positive values favor MIR. Figure~\ref{fig:loss_gap_tail_overlay_mir_example_comparison} Left shows that the MIR-better tail 
is both larger and slightly heavier than the baseline-better tail after removing the center region \(|\Delta \ell_t| < 1\): 
\(6.61\%\) of tokens satisfy \(\Delta \ell_t \ge 1\), while \(5.41\%\) satisfy \(\Delta \ell_t \le -1\). 
Therefore, the overall loss gain is not driven by a few isolated outliers, but appears on a broad set of hard validation tokens.

The top positive-gap tokens reveal a clear qualitative pattern. We rank all validation positions by \(\Delta \ell_t\), 
decode the top \(0.1\%\) MIR-better positions, and inspect the true token together with the preceding and following tokens. 
These high-gap examples are dominated by continuation problems rather than standalone rare targets: 
\(62.6\%\) are word or subword continuations, \(16.3\%\) occur in non-English or transliterated text, and \(11.6\%\) are punctuation tokens. 
Importantly, the true token is often a common token such as ``and'', ``to'', ``of'', ``is'', a comma, 
or a closing parenthesis. What makes these positions hard to predict is the local prefix context: non-English languages, rare names, mixed scripts, 
broken word pieces, or noisy web and markup text.

\begin{figure*}[ht]
    \centering
    \begin{subfigure}[t]{0.41\textwidth}
        \vspace{0pt}
        \centering
        \includegraphics[width=\linewidth]{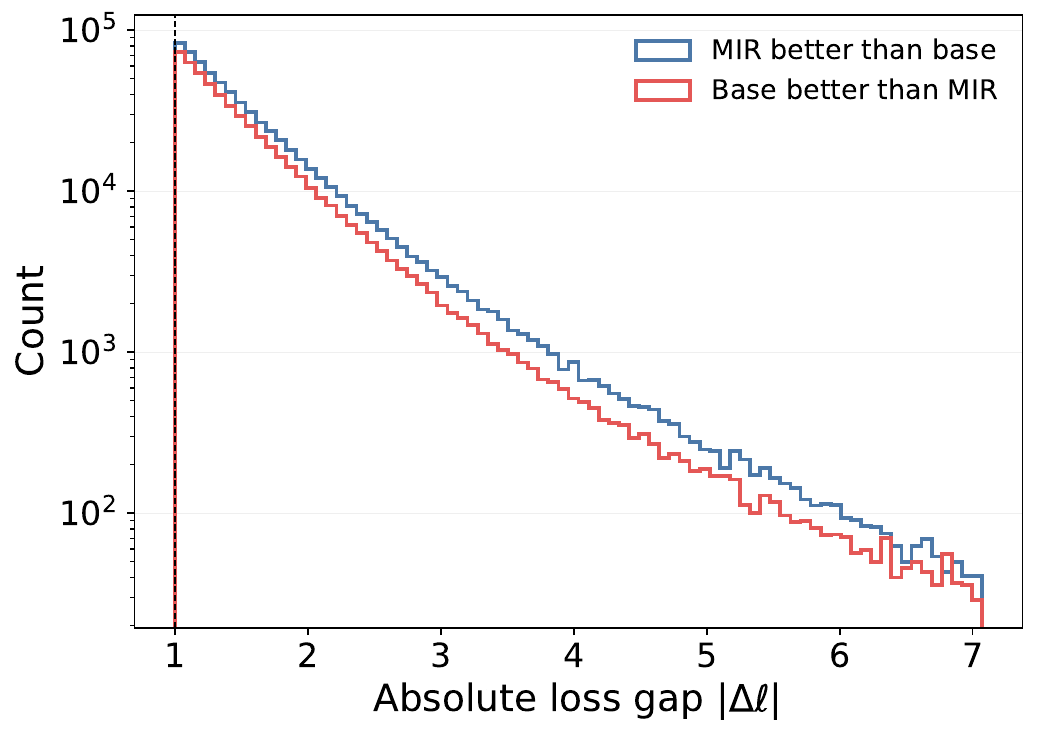}
    \end{subfigure}\hfill
    \begin{subfigure}[t]{0.56\textwidth}
        \vspace{0pt}
        \centering
        \includegraphics[width=\linewidth]{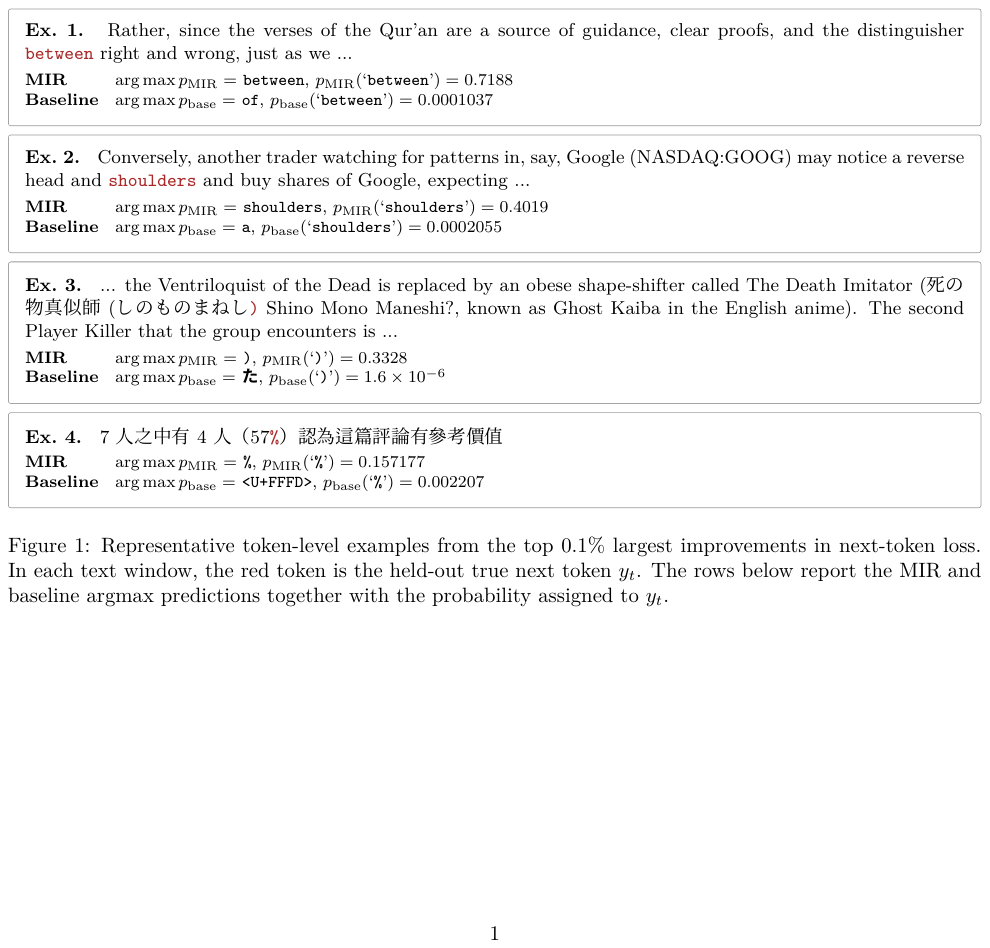}
    \end{subfigure}
    \caption{Left: Absolute token-level loss-gap tails on all validation tokens 
    for the 1.4B models after removing the center region \(|\Delta \ell| < 1\). 
    The positive tail, where MIR assigns higher probability 
    to the true next token than the strongly regularized baseline, is both larger and slightly heavier.
    Right: Representative MIR-better tokens from the top \(0.1\%\) positive-gap set. 
    In each example, the target next token is highlighted in red, and the probabilities assigned to that token by MIR and by the strongly regularized baseline are shown below. Many large-gap cases involve names, 
    subword completions, mixed scripts, or noisy web and technical text, even when the true token itself is common.}
    \label{fig:loss_gap_tail_overlay_mir_example_comparison}
\vspace{-10pt}
\end{figure*}

Representative examples illustrate how these gains arise in practice. Figure~\ref{fig:loss_gap_tail_overlay_mir_example_comparison}(right) shows cases 
where the baseline falls back to a generic continuation or keeps following the wrong local pattern, whereas MIR recovers the intended continuation. 
One example is a mixed-script entity name followed by a Japanese parenthetical gloss, where the correct next token is the closing parenthesis ``)''; 
MIR predicts it correctly, while the baseline keeps extending the Japanese string. 
Taken together, these results suggest that MIR helps most when the next-token decision depends on robustness to 
unusual or noisy local prefix context rather than on simple frequency-based continuation. This pattern is consistent with our theoretical intuition that masking 
regularizes the model's dependence on irrelevant details in the prefix context and encourages it to learn predictive features that generalize across contexts.

\textbf{Downstream evaluations.} To understand whether the improved validation loss translates to capability 
gains on downstream tasks, we evaluate the two 1.4B models trained 
on DCLM dataset with \(U=\) 100M across a suite of downstream tasks using \texttt{lm-evaluation-harness} \citep{eval-harness}. 

Table \ref{tab:downstream_evals} shows that the MIR-trained model 
achieves superior performance on six of the eight evaluated metrics. 
The improvements are particularly pronounced on reasoning and 
reading comprehension tasks, pushing accuracy on BoolQ up by 10.18 percentage 
points and SciQ up by 2.20 percentage points compared to the strongly regularized baseline.
Because these models are trained at academic scale, with 1.4B parameters and 100M unique training tokens, 
we view the downstream evaluations as a coarse capability check: MIR shows a large gain on BoolQ 
and smaller mixed changes elsewhere, with the overall pattern directionally agreeing with 
its validation-loss improvement.

\vspace{-10pt}
\begin{table}[h]
\centering
\caption{Downstream zero-shot evaluation for 1.4B models trained on the DCLM data with \(U=\)100M.}
\label{tab:downstream_evals}
\begin{tabular}{lrrr}
\toprule
Task & Random Guess & Regularized Baseline & + MIR \\
\midrule
ARC-Easy (acc\_norm) & 0.2500 & 0.3805 $\pm$ 0.0100 & \textbf{0.3893 $\pm$ 0.0100} \\
BoolQ (acc) & 0.5000 & 0.4511 $\pm$ 0.0087 & \textbf{0.5529 $\pm$ 0.0087} \\
HellaSwag (acc\_norm) & 0.2500 & 0.2833 $\pm$ 0.0045 & \textbf{0.2855 $\pm$ 0.0045} \\
PiQA (acc\_norm) & 0.5000 & \textbf{0.5996 $\pm$ 0.0114} & 0.5985 $\pm$ 0.0114 \\
RACE (acc) & 0.2500 & 0.2689 $\pm$ 0.0137 & \textbf{0.2766 $\pm$ 0.0138} \\
SciQ (acc\_norm) & 0.2500 & 0.5780 $\pm$ 0.0156 & \textbf{0.6000 $\pm$ 0.0155} \\
Lambada (acc) & $\sim$0.0000 & \textbf{0.2271 $\pm$ 0.0058} & 0.2261 $\pm$ 0.0058 \\
Lambada (perplexity) & N/A & 112.8966 $\pm$ 4.9091 & \textbf{106.7115 $\pm$ 4.5752} \\
\bottomrule
\end{tabular}
\end{table}

\section{Scaling in the Data-Constrained, Compute-Rich Regime}
\label{sec:scaling}
In this section, we extend the experiments to a five-by-four grid of model sizes by unique-data budgets and use it to (a) show that the classical Chinchilla scaling law is misspecified in the data-constrained, compute-rich regime, (b) propose the SoftQ scaling law as a better-fitting alternative, and (c) quantify MIR's data efficiency gain over the strongly regularized baseline.

\textbf{Constructing the baseline grid.} We choose three additional unique-data budgets beyond the 100M used in Section~\ref{sec:mir_exp}: 200M, 300M, and 400M. The grid is thus five model sizes \(\times\) four data sizes: \( \{72\text{M}, 140\text{M}, 257\text{M}, 664\text{M}, 1.4\text{B}\} \times \{100\text{M}, 200\text{M}, 300\text{M}, 400\text{M}\} \). For each cell, we tune the number of epochs, weight decay, and learning rate; the optimal weight decay is consistently much larger than the standard value of 0.1, so following \citet{kim2026pretraining} we call this the strongly regularized recipe. See Appendix~\ref{app:exp_details} for the hyperparameter search and best configurations. The result is a baseline dataset \(\{(N, U, L)\}\) of 20 points, where \(L\) is the validation loss of the AR model of size \(N\) in the scaling ladder trained on \(U\) unique tokens with the best hyperparameters \((N_E, \text{weight decay}, \text{learning rate})\) for that cell.

\vspace{-10pt}
\subsection{The SoftQ scaling law}

\textbf{Why Chinchilla is Misspecified.}
The Chinchilla scaling law decomposes loss into irreducible entropy, 
finite-parameter error, and finite-data error:
\begin{equation}
    L_{\mathrm{Ch}}(N,U)=E+\frac{A}{N^\alpha}+\frac{B}{U^\beta}.
    \label{eq:chinchilla-main}
\end{equation}
Its additive structure implies that the parameter and 
data terms are separable. Consequently, given a model with size \(N\), 
the loss gap between two unique data budgets \(U_1, U_2\) does not depend on \(N\):
\[
L_{\mathrm{Ch}}(N,U_1)-L_{\mathrm{Ch}}(N,U_2)=\frac{B}{U_1^{\beta}}-\frac{B}{U_2^{\beta}}.
\]
This prediction is at odds with the expected behavior in data-constrained,
compute-rich pretraining. The marginal value of additional unique data should
depend on model size. For sufficiently small models, both \(U_1\) and \(U_2\)
provide more unique information than the model can effectively exploit, so the
losses obtained from the two data budgets should be similar. In this regime, the
loss gap should be close to zero. For sufficiently large models, capacity is no
longer the binding constraint, and the difference in available unique information
between \(U_1\) and \(U_2\) should become visible in validation loss. The gap
should therefore increase with \(N\), reflecting a coupling between model size and
unique data budget that Chinchilla's additive form cannot represent.

We verify this behavior empirically. Figure~\ref{fig:softq-gap-overview} shows the diagnostic directly: the loss gap between each smaller data budget and the 400M budget increases with model size, but Chinchilla predicts a constant gap for each budget.
This motivates a coupled law rather than an additive one.

\textbf{Existing coupled laws.} Two prior laws have moved in this direction.
\cite{niklas2023} generalize the Chinchilla law by
replacing raw data and parameter counts with effective model size \(N'\) and effective data size \(D'\)
that saturate under repeated data and excess parameters.
It includes the number of epochs \(N_E\) as an additional input to predict the validation loss:
\begin{align}
    L_{\mathrm{M}}(N,U,N_E)
    =E+\frac{A}{(N')^\alpha}+\frac{B}{(D')^\beta},\quad   D' = f(U, N_E), \ N'= g(N, U, N_E),
\end{align}
which has seven parameters to fit.
\cite{quanta-scalinglaw} derive a scaling law from the quanta-skill learning model. They assume that the use frequencies of skills follow a power law and obtain
\(L(N,U)-E \propto n(N, U)^{-\alpha}\), where \(n(N, U)\) is the number of
skills the model can learn given \(N\) parameters and \(U\) unique tokens.
Under further assumptions, they show \(n(N, U) \propto N\) when
\(U \rightarrow \infty\) and \(n(N, U) \propto U^{1/(1+\alpha)}\) when
\(N \rightarrow \infty\). Concurrently, \cite{merrill2026olmo} proposed Expressivity-Aware Scaling Laws, which derived the same scaling properties.
Setting
\(n(N, U) = \big(A/N + B/U^{1/(1+\alpha)}\big)^{-1}\) yields the Quanta scaling
law:
\begin{equation}
    L_{\mathrm{Q}}(N,U)
    =
    E+
    \left(
        \frac{A}{N}
        +
        \frac{B}{U^{1/(1+\alpha)}}
    \right)^{\alpha},
    \label{eq:quanta-main}
\end{equation}
where the marginal value of increasing model size depends on the available data through the outer exponent.
We give the full expression of Muennighoff law in Appendix~\ref{app:scaling-law-details} and the detailed Quanta derivation in Appendix~\ref{app:quanta-derivation}.

\textbf{SoftQ.} 
 Motivated by the skill-learning view of scaling, we propose the \emph{SoftQ scaling law}, 
a soft-quanta law that combines the parameter-limited and data-limited regimes through a smooth bottleneck:
\begin{equation}
    L_{\mathrm{SoftQ}}(N,U)
    =
    E+
    \left(
        \frac{A}{N^{\rho}}
        +
        \frac{B}{U^{\rho/(1+\alpha)}}
    \right)^{\alpha/\rho}.
    \label{eq:softq-main}
\end{equation}
The parameter $\rho$ controls the sharpness of the transition between the parameter-limited and data-limited regimes.
As $U\rightarrow\infty$, the law recovers a parameter-scaling limit $L-E\propto N^{-\alpha}$; 
as $N\rightarrow\infty$, it recovers a data-scaling limit $L-E\propto U^{-\alpha/(1+\alpha)}$.
It has five fitted parameters, $\{A,B,E,\alpha,\rho\}$, matching the Chinchilla parameter count 
while explicitly coupling model size and data size.
When \(\rho = 1\), SoftQ reduces to the Quanta law, so SoftQ strictly nests Quanta as a special case while adding one parameter that controls the bottleneck sharpness.

\vspace{-5pt}
\subsection{Scaling Laws Comparison and MIR data efficiency}\label{sec:mir-data-efficiency}
\vspace{-5pt}
We compare Chinchilla, Quanta, Muennighoff, and SoftQ on three diagnostics: (1) full fit on the strongly regularized baseline results; 
(2) held-out fit, training on the 100M/200M/300M points and predicting the five 400M points; and 
(3) full fit on an independent baseline dataset provided by \citet{kim2026pretraining}.
For the fitting protocol, Chinchilla, Quanta, and SoftQ use the Approach-3-style objective of
\citet{hoffmann2022training}: Huber loss with threshold $\delta=10^{-3}$ on
log-loss residuals.  For the Muennighoff-style law, our main comparison uses a
dataset-adapted two-stage protocol: fit the base Chinchilla coefficients on the
same split, then hold them fixed while fitting only the decay constants
$R_N^\star$ and $R_D^\star$.  We report RMSE and MAE on the raw validation-loss
scale, and an SSE-based Gaussian AIC: \(n\log(\mathrm{RSS}/n)+2k,
\)
where $k$ is the number of fitted parameters.

\begin{table}[t]
\centering
\caption{Scaling laws comparison results. Lower is better.}
\label{tab:scaling-law-selection-main}
\small
\setlength{\tabcolsep}{3.2pt}
\begin{tabular}{lrrrrrrrr}
\toprule
& & \multicolumn{3}{c}{Full fit} &
\multicolumn{2}{c}{Held-out $400$M} &
\multicolumn{2}{c}{[Kim et al.] Full fit} \\
\cmidrule(lr){3-5}\cmidrule(lr){6-7}\cmidrule(lr){8-9}
Law & $k$ & RMSE & MAE & AIC & RMSE & MAE & RMSE & AIC \\
\midrule
Chinchilla  & 5 & 0.02653 & 0.01802 & -135.18 & 0.03106 & 0.02540 & 0.04041 & -92.68 \\
Quanta      & 4 & 0.01252 & 0.00889 & -167.23 & 0.01497 & 0.01207 & 0.02375 & -111.69 \\
Muennighoff & 7 & 0.02335 & 0.01713 & -136.29 & 0.03252 & 0.02711 & 0.03299 & -95.17 \\
SoftQ       & \textbf{5} & \textbf{0.00801} & \textbf{0.00520} & \textbf{-183.06}
             & \textbf{0.00595} & \textbf{0.00471} & \textbf{0.00785} & \textbf{-145.10} \\
\bottomrule
\end{tabular}
\end{table}

Table~\ref{tab:scaling-law-selection-main} shows that SoftQ is the strongest
baseline law across all three diagnostics.  It gives the best in-sample fit on
the full baseline dataset, the best data-axis extrapolation to the held-out $400$M
budget, and the best fit on the external scaling law datasets from \cite{kim2026pretraining}.  
The held-out result is especially important for data-efficiency estimation, because we later ask how
much additional unique data the regularized baseline would need to match an MIR
asymptote. Figure~\ref{fig:softq-gap-overview} visualizes this fit: SoftQ reproduces the empirical fan-out across data budgets that Chinchilla cannot.
Eq.~\eqref{eq:softq-fitted-main} in Appendix~\ref{app:scaling-law-details} gives its full expression.

We train the model with MIR on the same grid.
For each unique-token budget, we fit the parameter-scaling asymptote
$L_{\mathrm{MIR},U}(N)=E_U+A_U/N^{\alpha_U}$ using the five model sizes at that
budget. See Figure \ref{fig:app-dclm-scaling-8curves} for the fitted curves.
Taking $N\to\infty$ in Eq.~\eqref{eq:softq-fitted-main} gives the
regularized-baseline infinite-model curve
\(L_{\mathrm{Reg},\infty}(U)=
    0.306+2.249\,U^{-0.125}.\)
For each MIR asymptote $E_U$, we solve
$L_{\mathrm{Reg},\infty}(U_{\mathrm{eq}})=E_U$ and report
$U_{\mathrm{eq}}/U_{\mathrm{MIR}}$.
Under this baseline law, MIR consistently improves unique-data efficiency:
at $200$M--$400$M unique tokens, the regularized baseline would need about
$1.28$--$1.34\times$ as much unique data to match the MIR infinite-model asymptote. For completeness, we also use the other three scaling laws to calculate the data efficiency ratios. SoftQ gives the most conservative data efficiency ratio at \(U=\) 400M among all scaling laws. See Appendix \ref{sec:app-alt-efficiency-laws} for details.

\section{Related Work}

\paragraph{Classical Scaling Laws}
Empirical scaling laws have provided a central tool for predicting language-model loss 
as a function of model size, data, and compute. 
\cite{hestness2017deep, rosenfeld2020a} found that deep-learning generalization curves often follow power laws across model and dataset scales. 
For language modeling, \cite{kaplan2020scaling} showed that cross-entropy loss scales predictably 
with parameter count, dataset size, and training compute. 
\cite{henighan2020scaling} extended similar power-law behavior to other autoregressive generative domains. 
\cite{hoffmann2022training} revised the compute-optimal allocation problem and argued that model size 
and training tokens should be increased at comparable rates, 
leading to the Chinchilla recipe. 
These laws are highly effective in the abundant-data setting, 
but they typically treat processed tokens as fresh samples 
and therefore do not explicitly distinguish unique data from repeated epochs. 
This distinction becomes important once the available corpus size, 
rather than compute, becomes the binding resource.

\paragraph{Data-constrained Pretraining}
\cite{niklas2023} studied repeated-data training and proposed effective-resource scaling laws 
that account for the diminishing value of repeated tokens and excess parameters; 
they found that modest repetition can be close to fresh data, 
but that the marginal value of repetition eventually decays. 
\cite{kim2026pretraining} sharpened this into an infinite-compute, 
fixed-data viewpoint, showing that simply increasing epochs and parameters can overfit, 
and that much stronger regularization, especially substantially larger weight decay than 
standard practice, can improve the best attainable loss.
Recent work has also explored data-side and benchmark-driven approaches to this regime.
\citet{kim2026data} generate document-level synthetic rephrases and show that scaling these generations improves validation
loss on web text. The NanoGPT Slowrun benchmark
\citep{qlabs2026slowrun} similarly operationalizes fixed-data, high-compute language
modeling by fixing 100M FineWeb \citep{fineweb} tokens and ranking methods by validation loss with no compute limit.

\paragraph{Masked Diffusion Language Model}
Discrete and masked diffusion language models provide an alternative to left-to-right 
factorization by corrupting tokens and learning to reverse the corruption process. 
\cite{sahoo2024simple} proposed masked diffusion language models with effective training recipes.
\cite{nie2025large, bie2025llada2} scaled up the model and data size to train large-scale diffusion language models. 
In the data-constrained setting, \cite{prabhudesai2025diffusion} and \cite{ni2025diffusion} report that masked diffusion models 
can outperform autoregressive models under repeated-data training, 
attributing the gains to factors such as any-order prediction, dense denoising supervision, and implicit Monte Carlo augmentation.

\paragraph{Masking, noising, and denoising objectives}
Training on corrupted inputs has a long history as a regularization and representation-learning principle. 
In NLP, BERT popularized masked language modeling for bidirectional representation learning \citep{devlin2019bert}, 
while BART and T5 extended masking and denoising ideas to sequence-to-sequence pretraining 
through masked-span reconstruction, arbitrary text corruption, 
and span corruption \citep{lewis-etal-2020-bart, raffel2020exploring}. 
These objectives use masking as the main pretraining task and often change the architecture 
or inference interface relative to decoder-only autoregressive language modeling. 
MIR instead keeps the standard causal next-token objective and autoregressive decoding, 
using masking only as an auxiliary input perturbation during training.
Several works are closer to MIR in spirit because they inject masking or dropout 
into autoregressive or limited-data training. 
\cite{zhang-etal-2020-token} used word and token dropout as data augmentation and regularization 
in sequence modeling. 
\cite{pmlr-v267-zhuang25b} proposed Mask-Enhanced Autoregressive Prediction, 
which masks a small fraction of input tokens 
and then performs standard next-token prediction to improve retrieval and 
long-context behavior. \cite{wang2025entropy} masked low-entropy tokens to 
regularize multi-epoch training on limited domain data.

Closely related is \cite{gao2025makes}, which observed that random input masking drives data efficiency in dLLMs. Their study focused on how weight decay and input masking separately impact data efficiency. Tuning a 0.6B model, they showed that each technique independently reduces validation loss and narrows the dLLM-AR gap, finding that stronger regularization (0.5 weight decay, 0.1 MLP dropout) or dLLM-style inputs separately make AR models competitive with dLLMs. In contrast,
our evaluation of MIR focuses on the interaction between input masking and weight decay. Instead of treating them as independent alternatives, we demonstrate that dLLM-style masking provides a complementary benefit to weight decay and dropout when deployed as an auxiliary loss, further lowering AR validation loss. Moreover, this gain widens as models scale. To quantify this gain, we further developed a new coupled scaling law (SoftQ) specifically for this data-constrained regime, confirming that MIR yields superior scaling behavior and quantifiable data-efficiency gains.

\section{Discussion}

We study data-constrained, compute-rich pretraining along two axes, 
regularization and scaling. First, large weight decay substantially reduces dLLM validation loss; 
MIR, an auxiliary next-token loss on randomly masked inputs, 
further improves AR model validation loss on top of large weight decay 
across 72M to 1.4B parameters. Second, the additive Chinchilla law is misspecified in this regime 
because it decouples model and data size; we propose the SoftQ scaling law, 
which couples them and fits both our experiments and 
an independent grid from prior work better than existing alternatives.
Our study has several limitations. 
Experiments span up to 1.4B parameters and 400M unique tokens, small relative to frontier-scale pretraining. 
We held model architecture and optimizer fixed; varying these could yield further gains. Our protocol also relies on heavy per-cell hyperparameter search; a hyperparameter-transfer recipe for this regime is a natural next step.

\input{acknowledgements.tex}

\bibliographystyle{unsrtnat}
\bibliography{references}


\newpage
\appendix

\etocsettocdepth.toc{subsection}
\section*{Appendix}
\begingroup
\parindent=0em
\hypersetup{linkcolor=black}
\etocsetnexttocdepth{subsection}
\etocsettocstyle{\rule{\linewidth}{\tocrulewidth}\vskip0.5\baselineskip}{\rule{\linewidth}{\tocrulewidth}}
\tableofcontents
\endgroup

\section{Experiment Details}
\label{app:exp_details}

This appendix describes the compute setup, architecture ladder, data splits,
training recipes, hyperparameter searches, and auxiliary experimental results.
See the full data generation and training code in \href{https://github.com/yixinw-lab/dc_pretrain}{Github}. See the Wandb logs at \href{https://wandb.ai/zhiwei-xu2000/overtrain-dclm?nw=nwuserzhiweixu2000}{WandB}.

\subsection{Compute, Architecture, and Scaling Ladder}
All experiments can run on eight 80GB SXM H100 GPUs. The longest AR model run
completes in under 24 hours. The longest dLLM model run completes in under
48 hours.

We use a Llama-style decoder-only transformer \citep{llama3} with QK norm and
interleaved global-local self-attention as the model architecture. Compared to
the architecture used in \cite{kim2026pretraining}, we additionally use QK norm
and interleaved local and global attention. QK norm is widely used
\citep{olmo20242, gemma3t, olmo2025olmo, yang2025qwen3} in recent open-source
large language models to stabilize pretraining, and interleaved local and global
attention is also widely used \citep{gemma3t, olmo2025olmo, huang2026step} to
reduce compute and reduce KV cache size. We use the GPT-2 \citep{radford2019language}
tokenizer with one extra [MASK] token for random masking. The vocabulary size is
\(50258\).

We follow the scaling ladder
\[
\text{ScalingLadder}(k) = (kW_1, kL_1, S_1, B_1),
\]
where \(W_1=1024\) is the embedding dimension when \(k=1\), \(L_1=12\) is the number of layers when \(k=1\), \(S_1 = 2048\) is the sequence length, \(B_1 = 128\) is the total batch size, and  $k \in \{0.5, 0.75, 1, 1.5, 2\}$. Across the scaling ladder, the attention
head dimension is fixed at 64, while the depth, embedding dimension, MLP
dimension, and number of attention heads increase with scale. The resulting
models span from 71,965,952 parameters to 1,439,273,984 parameters.
Table~\ref{tab:scaling_ladder} summarizes the full architecture ladder.

\begin{table*}[t]
\caption{Scaling Ladder Details.}
\label{tab:scaling_ladder}
\centering
\small
\begin{tabular}{c c c c c c c}
\hline
$k$ & Layers & Embed dim & MLP dim & Heads & Head dim & Model size \\
\hline
0.5  & 6  & 512  & 1536 & 8  & 64 & 71,965,952 \\
0.75 & 9  & 768  & 2048 & 12 & 64 & 140,983,680 \\
1.0  & 12 & 1024 & 2816 & 16 & 64 & 257,190,400 \\
1.5  & 18 & 1536 & 4096 & 24 & 64 & 664,200,960 \\
2.0  & 24 & 2048 & 5632 & 32 & 64 & 1,439,273,984 \\
\hline
\end{tabular}
\end{table*}

\subsection{Data and Evaluation Splits}
Following \cite{kim2026pretraining}, we use DCLM-POOL \citep{dclm24}, an
open-source pretraining dataset containing \(240\)T tokens. We use the DCLM
subset generated by \cite{kim2026pretraining} to construct datasets with 100M,
200M, 300M, and 400M unique training tokens. Each smaller-budget dataset is a
subset of the corresponding larger-budget dataset. We always use the same
evaluation dataset, which contains 10M tokens from DCLM.

We also use Stack-V2 \citep{stackv224} to evaluate whether masked-input
regularization is beneficial for pretraining on code data. The corresponding
validation losses are reported in Table~\ref{tab:stack_v2_losses}.

\subsection{AR Training Recipe}
Unless stated otherwise, AR experiments use the AdamW optimizer \citep{adamw18}
with \(\beta_1=0.9\), \(\beta_2=0.95\), and \(\epsilon = 10^{-8}\). This config
is adopted from \cite{kim2026pretraining}. For all AR model pretraining, we use
the Warmup-stable-decay (WSD) \citep{wsd24} learning-rate schedule with \(1\%\)
of the total steps for linear warmup and \(10\%\) of the total steps for
warmdown. We set dropout rate to be \(0.1\).
In comparison, \cite{kim2026pretraining} uses cosine annealing. We
tried both schedules for standard AR model pretraining and found that WSD
performs better across all model sizes we tested.

\subsection{dLLM Baseline Protocol}
For dLLM pretraining, we adopt the config used in
\cite{prabhudesai2025diffusion}: batch size 256, sequence length 2048, learning
rate schedule with peak \(2\times 10^{-4}\), minimum \(2\times 10^{-5}\), 1\%
warmup, cosine decay, weight decay 0.1, and gradient clipping of 1.0. For the
number of epochs, we adopt the optimal values reported in
\cite{prabhudesai2025diffusion}: 500 epochs for the 257M and 664M models and
800 epochs for the 140M model. We calculate validation loss after each epoch and
report the lowest value. The 140M model achieves its lowest validation loss
\(3.646694\) at epoch 789, the 257M model achieves \(3.602763\) at epoch 483,
and the 664M model achieves \(3.680272\) at epoch 141. We set dropout rate to be \(0.1\).

Table~\ref{tab:dclm100m_dllm_comparison} reports the DCLM 100M validation
losses for the AR and dLLM recipes at the three model sizes where we run dLLM
pretraining. The strongly regularized dLLM uses the AR-tuned weight decay while
keeping the other dLLM hyperparameters fixed to the protocol above.

\begin{table}[H]
\caption{DCLM 100M validation loss for AR and dLLM recipes at different model sizes. For AR recipes, we report final validation loss; for dLLM
recipes, we report the best across-epoch validation loss.}
\label{tab:dclm100m_dllm_comparison}
\centering
\small
\begin{tabular}{lccc}
\toprule
& \multicolumn{3}{c}{Model size} \\
\cmidrule(lr){2-4}
Recipe & 140M & 257M & 664M \\
\midrule
Multi-Epoch dLLM & 3.646694 & 3.602763 & 3.680272 \\
Multi-Epoch AR & 3.945268 & 3.879782 & 3.821800 \\
Strongly Regularized dLLM & 3.579445 & 3.483598 & 3.387994 \\
Strongly Regularized AR & 3.471395 & 3.422107 & 3.367138 \\
MIR & \textbf{3.468458} & \textbf{3.404833} & \textbf{3.332668} \\
\bottomrule
\end{tabular}
\end{table}

\subsection{Multi-epoch AR Epoch Search}
We search for the best number of epochs for each model size for multi-epoch AR.
As shown in Figure \ref{fig:multiepochAR140_257_664}, 16 epochs is the best for
140M, 8 epochs is the best for 257M, and 32 epochs is the best for 664M.
\begin{figure}[H]
    \centering
    \includegraphics[width=0.32\linewidth]{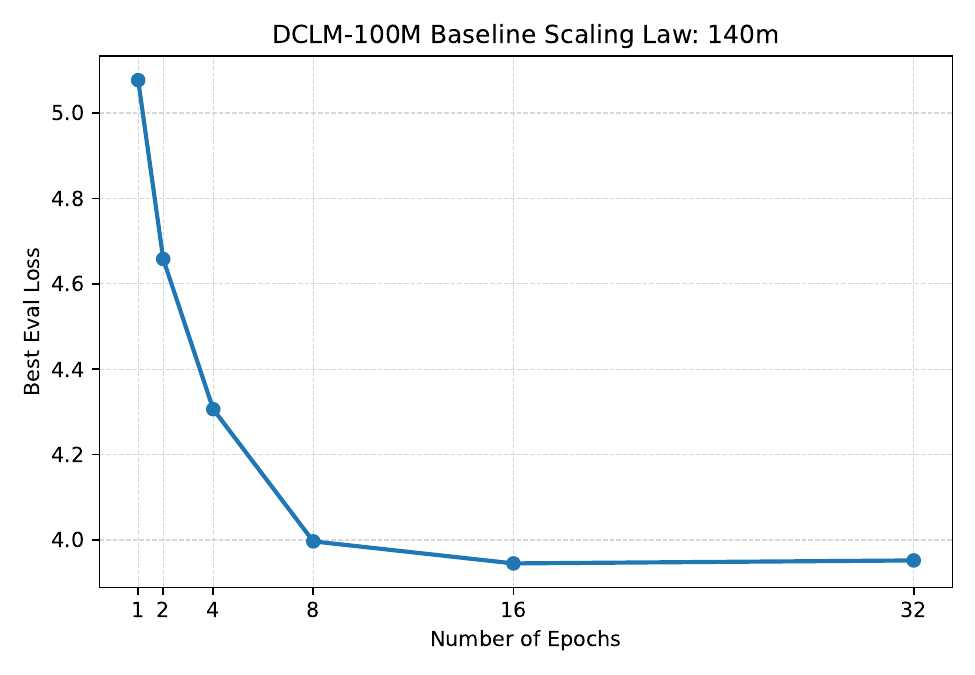}
    \includegraphics[width=0.32\linewidth]{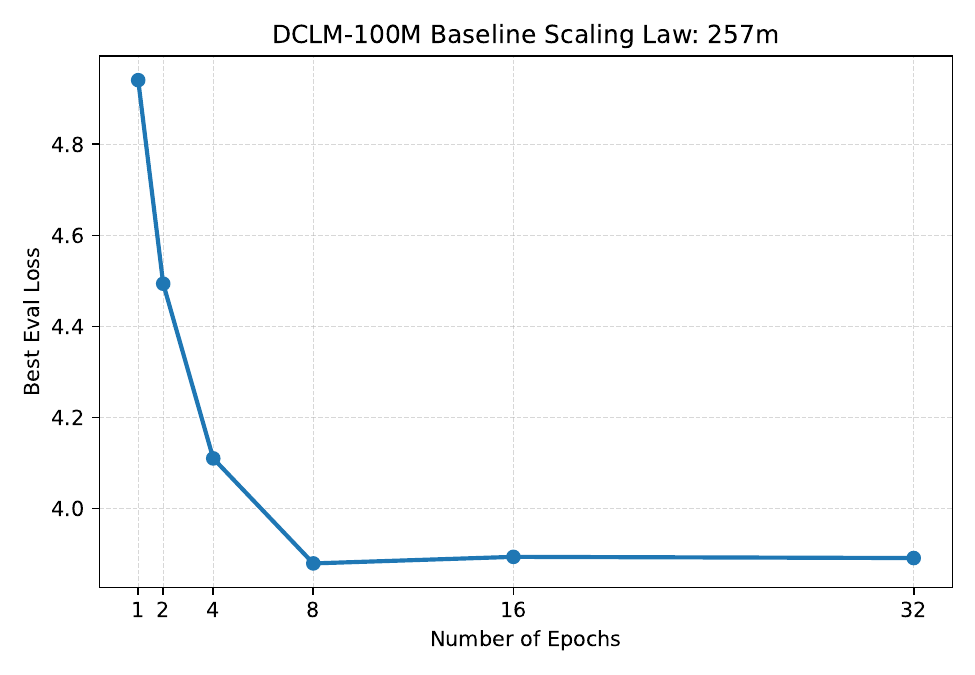}
    \includegraphics[width=0.32\linewidth]{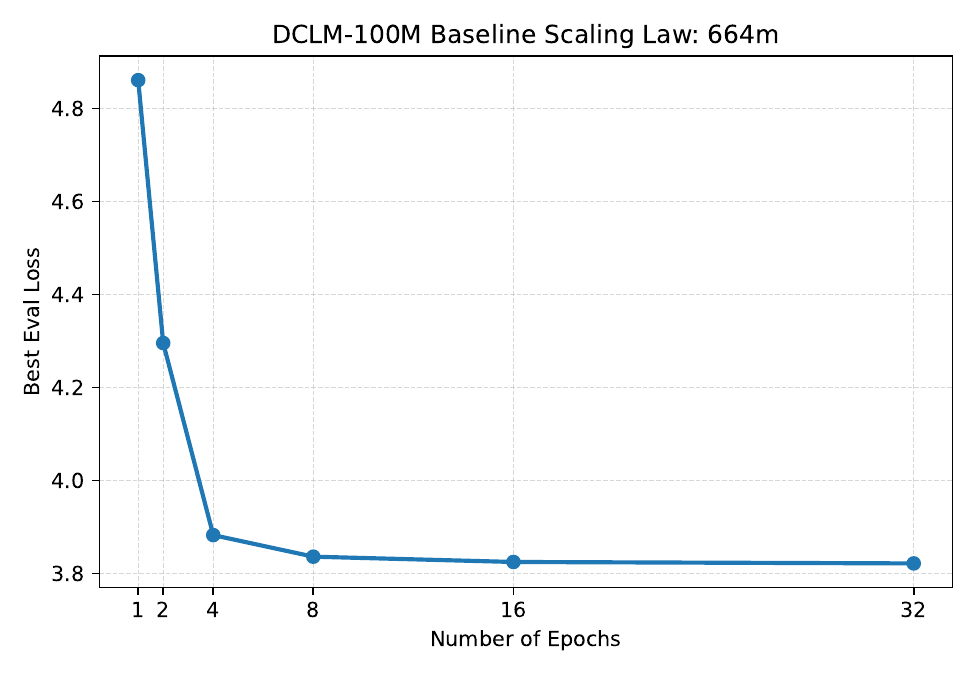}
    \caption{Validation loss vs. number of epochs. Weight decay is fixed to 0.1, 
    peak learning rate is fixed to 2e-4.
    Left: model size 140M; Middle: model size 257M; Right: model size 664M.}
    \label{fig:multiepochAR140_257_664}
\end{figure}

\subsection{Strongly Regularized Baseline Search}
The strongly regularized baseline sweeps are conducted in the data-constrained
DCLM setting described in the main text. We run separate searches at unique-data
budgets of 100M, 200M, 300M, and 400M tokens. Within each budget, we use the
same training and evaluation datasets across all model scales so that
differences in performance can be attributed to model size and training
objective rather than differences in data exposure.

We tune the optimization settings separately for each model scale and data
budget. The search space consists of the number of training epochs, weight
decay, and learning rate. In general, larger models prefer fewer epochs and
stronger weight decay, while the selected learning rates remain in the range of
$10^{-3}$ to $10^{-2}$. We describe the full 100M sweeps first, then append the
larger-budget searches used in the scaling-law analysis.

\textbf{72M model.}
We search over epochs $\{16, 32, 64\}$, weight decay
$\{0.4, 0.8, 1.6\}$, and learning rate
$\{10^{-3}, 3\times 10^{-3}, 10^{-2}\}$. We additionally run a refined sweep
over epochs $\{16, 32, 64\}$, weight decay $\{0.1, 0.2, 0.4\}$, and learning
rate $\{3\times 10^{-3}, 10^{-2}, 3\times 10^{-2}\}$. The best configuration
is $(32, 0.4, 10^{-2})$.

\textbf{140M model.}
We first search over epochs $\{8, 16, 32\}$, weight decay
$\{0.8, 1.6, 3.2\}$, and learning rate
$\{3\times 10^{-4}, 10^{-3}, 3\times 10^{-3}\}$. We then run an additional
sweep over epochs $\{16, 32, 64\}$, weight decay
$\{0.2, 0.4, 0.8, 1.6\}$, and learning rate
$\{10^{-3}, 3\times 10^{-3}, 10^{-2}, 3\times 10^{-2}\}$. The best
configuration is $(32, 0.8, 3\times 10^{-3})$.

\textbf{257M model.}
We search over epochs $\{8, 16, 32\}$, weight decay
$\{0.8, 1.6, 3.2\}$, and learning rate
$\{3\times 10^{-4}, 10^{-3}, 3\times 10^{-3}\}$. The best configuration is
$(16, 1.6, 10^{-3})$.

\textbf{664M model.}
We search over epochs $\{8, 16, 32\}$, weight decay
$\{0.8, 1.6, 3.2\}$, and learning rate
$\{3\times 10^{-4}, 10^{-3}, 3\times 10^{-3}\}$. The best configuration is
$(16, 1.6, 10^{-3})$.

\textbf{1.4B model.}
We search over epochs $\{4, 8, 16\}$, weight decay
$\{1.6, 3.2, 6.4\}$, and learning rate
$\{3\times 10^{-4}, 10^{-3}, 3\times 10^{-3}\}$. The best configuration is
$(16, 3.2, 10^{-3})$.

Table~\ref{tab:best_hparams} summarizes the selected hyperparameters at each
scale. 

\begin{table}[H]
\caption{Best strongly regularized hyperparameter configuration in the 100M unique-token setting.}
\label{tab:best_hparams}
\centering
\small
\begin{tabular}{l c}
\toprule
Model size & Best $(\text{epochs}, \text{weight decay}, \text{lr})$ \\
\midrule
72M  & $(32, 0.4, 10^{-2})$ \\
140M & $(32, 0.8, 3\times 10^{-3})$ \\
257M & $(16, 1.6, 10^{-3})$ \\
664M & $(16, 1.6, 10^{-3})$ \\
1.4B & $(16, 3.2, 10^{-3})$ \\
\bottomrule
\end{tabular}
\end{table}

Table~\ref{tab:best_hparams_larger_budgets} summarizes the selected
hyperparameters for the larger unique-data budgets used in the scaling-law
analysis. For 200M and 400M unique tokens, we run budget-specific sweeps. For
the intermediate 300M budget, we evaluate candidate configurations inherited
from the selected 200M and 400M settings at each model scale. The longest runs
take around 2 hours at 200M, 6 hours at 300M, and 8 hours at 400M on 8 H100s.

\begin{table}[H]
\caption{Selected strongly regularized hyperparameter configurations for the larger unique-data budgets. Each entry is
$(\text{epochs}, \text{weight decay}, \text{lr})$.}
\label{tab:best_hparams_larger_budgets}
\centering
\small
\begin{tabular}{lcc}
\toprule
Unique data & Model size & Best $(\text{epochs}, \text{weight decay}, \text{lr})$ \\
\midrule
200M & 72M & $(64, 0.2, 10^{-2})$ \\
200M & 140M & $(32, 0.4, 3\times10^{-3})$ \\
200M & 257M & $(16, 0.8, 10^{-3})$ \\
200M & 664M & $(16, 1.6, 10^{-3})$ \\
200M & 1.4B & $(16, 1.6, 10^{-3})$ \\
300M & 72M & $(64, 0.1, 10^{-2})$ \\
300M & 140M & $(64, 0.2, 3\times10^{-3})$ \\
300M & 257M & $(32, 0.8, 10^{-3})$ \\
300M & 664M & $(32, 0.8, 10^{-3})$ \\
300M & 1.4B & $(32, 1.6, 10^{-3})$ \\
400M & 72M & $(64, 0.1, 10^{-2})$ \\
400M & 140M & $(64, 0.2, 3\times10^{-3})$ \\
400M & 257M & $(32, 0.4, 10^{-3})$ \\
400M & 664M & $(32, 0.8, 10^{-3})$ \\
400M & 1.4B & $(32, 0.8, 10^{-3})$ \\
\bottomrule
\end{tabular}
\end{table}

\subsection{MIR Hyperparameter Tuning}\label{subsec:mir_coeff_tuning}
In MIR, for each sequence \(x\), a mask ratio \(r\) is sampled from \(\text{Unif}(r_{\min}, r_{\max})\), 
then for each position \(t \in [0, T-1]\), we use a Bernoulli random variable with success probability \(r\) 
to decide whether to mask \(x_t\). Denote the masked version as \(\tilde{x}\). We optimize
\[
\mathcal{L} = \mathcal{L}_{\mathrm{NTP}}(x) + \lambda \mathcal{L}_{\mathrm{NTP}}(\tilde{x}).
\]
We tune the values of \(r_{\min}, r_{\max}, \lambda\) using the \(1.4\)B model and DCLM 100M. 
See the results in Figure \ref{fig:reg_coeff_tuning}. The selected values are \(r_{\min}=0, r_{\max}=0.5, \lambda=0.4\). We also tried
\[
\mathcal{L} = (1-\lambda) \mathcal{L}_{\mathrm{NTP}}(x) + \lambda \mathcal{L}_{\mathrm{NTP}}(\tilde{x}),
\]
but its performance was slightly worse than \(\mathcal{L}_{\mathrm{NTP}}(x) + \lambda \mathcal{L}_{\mathrm{NTP}}(\tilde{x})\).
\begin{figure}[H]
    \centering
    \includegraphics[width=0.5\linewidth]{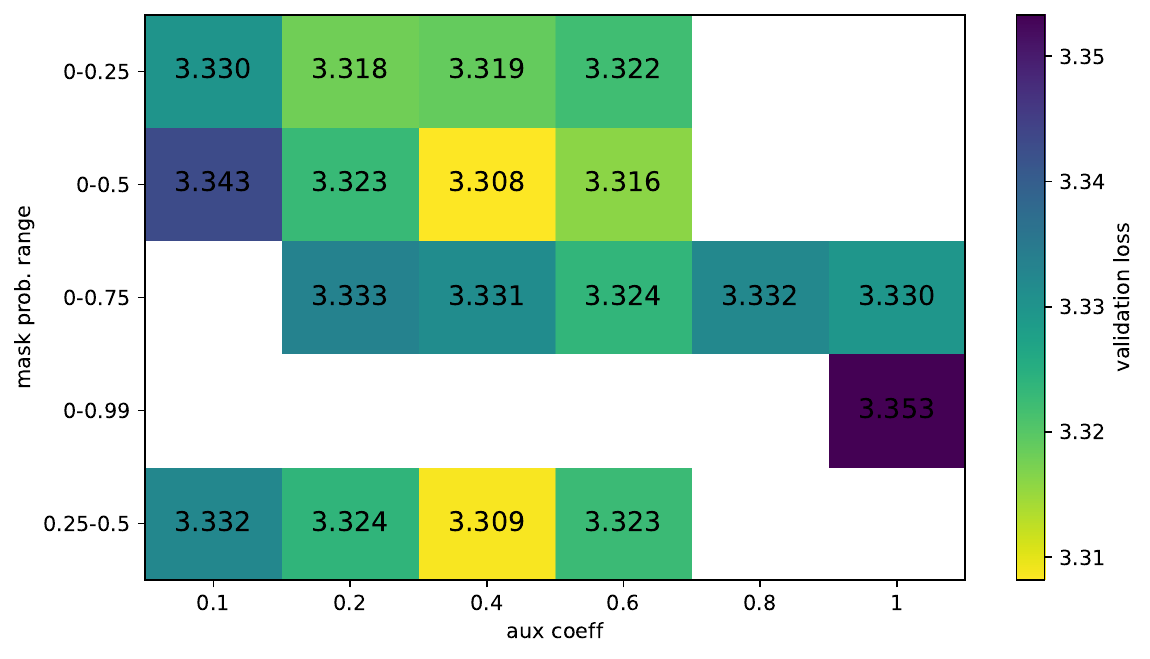}
    \caption{Tuning the mask ratio bounds (\(r_{\min}\), \(r_{\max}\)) and regularization coefficient \(\lambda\).}
    \label{fig:reg_coeff_tuning}
\end{figure}

\subsection{Auxiliary Experimental Results}
Table~\ref{tab:objective_comparison} reports the best evaluation loss across
model scales in the 100M unique-token setting. Table~\ref{tab:stack_v2_losses}
reports the Stack-V2 validation losses.

\begin{table}[H]
\caption{Best evaluation loss across model scales in the 100M unique-token setting (seed 42).}
\label{tab:objective_comparison}
\centering
\small
\begin{tabular}{l c c c c c}
\hline
Recipe & 72M & 140M & 257M & 664M & 1.4B \\
\hline
Single-epoch                           & 4.866105 & 4.960820 & 5.025738 & 5.019738 & 5.302995 \\
Strongly Regularized Recipe (Baseline) & 3.615903 & 3.471395 & 3.422107 & 3.367138 & 3.339578 \\
MIR                                    & 3.613621 & 3.468458 & 3.404833 & 3.332668 & 3.308170 \\
\hline
\end{tabular}
\end{table}

\begin{table}[H]
    \centering
    \caption{Validation loss on the Stack-V2 100M unique token dataset. MIR consistently outperforms the strongly regularized baseline across all model scales.}
    \label{tab:stack_v2_losses}
    \small
    \begin{tabular}{lrrrrr}
    \toprule
    Recipe & 72M & 140M & 257M & 664M & 1.4B \\
    \midrule
    Regularized Baseline & 1.064 & 1.020 & 1.005 & 0.996 & 0.983 \\
    MIR & \textbf{1.054} & \textbf{1.012} & \textbf{0.985} & \textbf{0.988} & \textbf{0.967} \\
    \bottomrule
    \end{tabular}
\end{table}

\subsection{Dataset Licenses}
\label{app:data_licenses}

\begin{table}[h]
\centering
\small
\begin{tabular}{p{0.1\linewidth} p{0.2\linewidth} p{0.1\linewidth} p{0.6\linewidth}}
\toprule
Dataset & Use in this paper & Version / URL & License and terms \\
\midrule
DCLM-Pool
& Natural-language pretraining and validation data.
& \href{https://huggingface.co/collections/mlfoundations/dclm-pools}{Link}
& CC BY 4.0. DCLM-Pool is derived from Common Crawl and is also subject to the Common Crawl Terms of Use. We cite the original DCLM paper and do not redistribute the raw dataset. \\

The Stack v2
& Code-heavy pretraining data for the Stack-v2 experiments.
& \href{https://huggingface.co/datasets/bigcode/the-stack-v2}{Link}. Version: 2.1.0

& No single dataset-wide content license; Hugging Face lists the license as ``other''. The dataset contains source code from repositories with various original licenses. User must comply with upstream licenses, including attribution clauses where relevant, the Stack-v2 access terms, Software Heritage principles for language-model training, and validated removal-request updates. We do not redistribute raw Stack-v2 files. \\
\bottomrule
\end{tabular}
\caption{Existing datasets used in this paper and their licenses or terms of use.}
\label{tab:data_licenses}
\end{table}

\section{Details of the Scaling-Law Analysis}
\label{app:scaling-law-details}

This appendix gives the full scaling-law definitions, fitting protocol, fitted
constants, residual diagnostics, and the plots moved out of the main text.

\subsection{Setup, Notation, and Fitting Objective}
We use $N$ for model size and $U$ for unique training tokens, both measured in
billions.  The baseline grid contains $5$ model sizes
$\{72\mathrm{M},140\mathrm{M},257\mathrm{M},664\mathrm{M},1.4\mathrm{B}\}$ and
$4$ unique-token budgets $\{100\mathrm{M},200\mathrm{M},300\mathrm{M},400\mathrm{M}\}$,
for $20$ strongly regularized baseline points.  The external grid
is provided in
\citet{kim2026pretraining}. For our dataset and the external one, the repetition variable in
the Muennighoff-style law is $R_D=\mathrm{epochs}-1$, where the epoch count is
taken from the best configuration or run identifier.

For Chinchilla, Quanta, and SoftQ, we minimize the Approach-3-style objective of
\citet{hoffmann2022training}:
\begin{equation}
    \min_{\theta}\sum_i
    \mathrm{Huber}_{10^{-3}}
    \left(
    \log \widehat L_\theta(N_i,U_i)-\log L_i
    \right).
    \label{eq:app-approach3}
\end{equation}
All reported RMSE, MAE, and RSS values are computed afterward on the raw
validation-loss scale.  AIC is the SSE-based Gaussian criterion
\begin{equation}
    \mathrm{AIC}=n\log(\mathrm{RSS}/n)+2k,
\end{equation}
where constants independent of the model are omitted.  Since the fitted objective
is Huber loss on log residuals, this AIC should be read as a common raw-loss
summary criterion rather than the exact likelihood optimized during fitting.

\subsection{Candidate Scaling Laws}
The Chinchilla law is
\begin{equation}
    L_{\mathrm{Ch}}(N,U)=E+\frac{A}{N^\alpha}+\frac{B}{U^\beta}.
    \label{eq:app-chinchilla}
\end{equation}
Its additive form implies a data-independent model-size gap:
$L_{\mathrm{Ch}}(N_1,U)-L_{\mathrm{Ch}}(N_2,U)$ does not depend on $U$.
Figure~\ref{fig:app-dclm-baseline-vs-data-size} shows that the observed DCLM
baseline gaps vary with the unique-token budget.

The Quanta-motivated joint law is
\begin{equation}
    L_{\mathrm{Q}}(N,U)
    =
    E+
    \left(
        \frac{A}{N}
        +
        \frac{B}{U^{1/(1+\alpha)}}
    \right)^{\alpha}.
    \label{eq:app-quanta}
\end{equation}
It couples the parameter and data axes before applying the outer power, so the
marginal value of additional parameters depends on the available data.

The Muennighoff-style law replaces raw resources by effective resources:
\begin{align}
    L_{\mathrm{M}}(N,U,R_D)
    &=
    E+\frac{A}{(N')^\alpha}+\frac{B}{(D')^\beta}, \\
    D'
    &=
    U+U R_D^\star\left(1-\exp[-R_D/R_D^\star]\right), \\
    N'
    &=
    U_N+U_N R_N^\star\left(1-\exp[-R_N/R_N^\star]\right).
\end{align}
Given the base Chinchilla coefficients, we compute the one-epoch optimal
parameter count
\begin{equation}
    N_{\mathrm{opt}}(U)
    =
    \left(\frac{\alpha A}{\beta B}\right)^{1/\alpha} U^{\beta/\alpha},
\end{equation}
then set $U_N=\min\{N,N_{\mathrm{opt}}(U)\}$ and
$R_N=N/U_N-1$. It has seven parameters to fit: $\{A,B,E,\alpha,\beta,R_N^\star,R_D^\star\}$.
Our main comparison uses a dataset-adapted two-stage protocol:
fit the base Chinchilla coefficients on the relevant split, then fit only
$R_N^\star$ and $R_D^\star$.  This is the appropriate comparison if the goal is
to evaluate the effective-resource functional form on our loss scale.  The
literal fixed-C4 coefficients from \citet{niklas2023} are included as an
ablation in Table~\ref{tab:app-muennighoff-fixed-c4}.

SoftQ is
\begin{equation}
    L_{\mathrm{SoftQ}}(N,U)
    =
    E+
    \left(
        A N^{-\rho}
        +
        B U^{-\rho/(1+\alpha)}
    \right)^{\alpha/\rho}.
    \label{eq:app-softq}
\end{equation}
When $\rho=1$, this reduces to the Quanta law.
The parameter $\rho$ controls the softness of the transition between the
parameter-limited and data-limited regimes.

\subsection{Fit Quality and Model Selection}

\begin{table}[H]
\centering
\caption{Full fit on all $20$ strongly regularized baseline
points.  Lower is better.}
\label{tab:app-dclm-full-fit}
\small
\begin{tabular}{lrrrr}
\toprule
Law & \# params & RMSE & MAE & AIC \\
\midrule
Chinchilla  & 5 & 0.026528 & 0.018016 & -135.18 \\
Quanta      & 4 & 0.012517 & 0.008889 & -167.23 \\
Muennighoff & 7 & 0.023345 & 0.017130 & -136.29 \\
SoftQ       & 5 & \textbf{0.008015} & \textbf{0.005204} & \textbf{-183.06} \\
\bottomrule
\end{tabular}
\end{table}

\begin{table}[H]
\centering
\caption{Fit on the DCLM $100$M/$200$M/$300$M baseline points and evaluation on
the held-out $400$M points.}
\label{tab:app-dclm-heldout}
\small
\begin{tabular}{lrrrr}
\toprule
Law & Train RMSE & Train MAE & Held-out RMSE & Held-out MAE \\
\midrule
Chinchilla  & 0.024636 & 0.016223 & 0.031063 & 0.025396 \\
Quanta      & 0.012430 & 0.008853 & 0.014975 & 0.012073 \\
Muennighoff & 0.023208 & 0.016216 & 0.032519 & 0.027111 \\
SoftQ       & \textbf{0.008850} & \textbf{0.005502} & \textbf{0.005955} & \textbf{0.004708} \\
\bottomrule
\end{tabular}
\end{table}

\begin{table}[H]
\centering
\caption{Held-out residuals on DCLM $400$M, predicted minus observed.}
\label{tab:app-dclm-heldout-residuals}
\small
\begin{tabular}{lrrrrr}
\toprule
Law & 72M & 140M & 257M & 664M & 1.4B \\
\midrule
Chinchilla  & -0.060050 & -0.024451 & -0.011210 & +0.017412 & +0.013857 \\
Quanta      & +0.028961 & +0.011817 & -0.009568 & -0.004273 & -0.005744 \\
Muennighoff & -0.061890 & -0.025911 & -0.011861 & +0.018631 & +0.017262 \\
SoftQ       & -0.002392 & +0.001378 & -0.008994 & -0.001468 & -0.009308 \\
\bottomrule
\end{tabular}
\end{table}

SoftQ has the best aggregate held-out RMSE and MAE.  It is closest to zero on
four of the five held-out model sizes; Quanta is slightly closer at the 1.4B
point.

\begin{table}[H]
\centering
\caption{Full fit on the regularized-baseline points provided by
\citet{kim2026pretraining}.}
\label{tab:app-paper16-full-fit}
\small
\begin{tabular}{lrrrr}
\toprule
Law & \# params & RMSE & MAE & AIC \\
\midrule
Chinchilla  & 5 & 0.040412 & 0.025554 & -92.68 \\
Quanta      & 4 & 0.023750 & 0.014726 & -111.69 \\
Muennighoff & 7 & 0.032989 & 0.022119 & -95.17 \\
SoftQ       & 5 & \textbf{0.007854} & \textbf{0.005955} & \textbf{-145.10} \\
\bottomrule
\end{tabular}
\end{table}

\subsection{Fitted Constants and Selected SoftQ Law}\label{subsec:fittedlaws}
See Table \ref{tab:app-dclm-params}, \ref{tab:app-heldout-params},
and \ref{tab:app-paper16-params} 
for the fitted constants of each scaling law in each scenario.
Specifically, on the full DCLM grid, the fitted SoftQ law is
\begin{equation}
\boxed{
L_{\mathrm{SoftQ}}(N,U)
=
0.30565+
\left(
39.2962\,N^{-0.79608}
+
92.4362\,U^{-0.69676}
\right)^{0.17906}
}
\label{eq:softq-fitted-main}
\end{equation}
with $N$ and $U$ in billions.  We therefore use
Eq.~\eqref{eq:softq-fitted-main} as the regularized-baseline law for the MIR
data-efficiency calculation.
\begin{table}[H]
\centering
\caption{Fitted constants on all $20$ DCLM baseline points.  For Muennighoff,
$A,\alpha,B,\beta,E$ are the first-stage Chinchilla coefficients.}
\label{tab:app-dclm-params}
\small
\setlength{\tabcolsep}{3.5pt}
\begin{tabular}{lrrrrrrl}
\toprule
Law & $A$ & $\alpha$ & $B$ & $\beta$ & $\rho$ & $E$ & Extra \\
\midrule
Chinchilla  & 0.1294 & 0.5167 & 0.5357 & 0.2924 & --     & 2.1116 & -- \\
Quanta      & 242.5882 & 0.1354 & 564.4767 & -- & -- & 0.2283 & -- \\
Muennighoff & 0.1294 & 0.5167 & 0.5357 & 0.2924 & --     & 2.1116 & $R_N^\star=31.39$, $R_D^\star=0.024$ \\
SoftQ       & 39.2962 & 0.1425 & 92.4362 & -- & 0.7961 & 0.3056 & -- \\
\bottomrule
\end{tabular}
\end{table}

\begin{table}[H]
\centering
\caption{Fitted constants for the held-out extrapolation experiment, trained only
on the DCLM $100$M/$200$M/$300$M points.}
\label{tab:app-heldout-params}
\small
\setlength{\tabcolsep}{3.0pt}
\begin{tabular}{lrrrrrrl}
\toprule
Law & $A$ & $\alpha$ & $B$ & $\beta$ & $\rho$ & $E$ & Extra \\
\midrule
Chinchilla  & 0.1363 & 0.4788 & 0.9823 & 0.1926 & --     & 1.6310 & -- \\
Quanta      & 799.5772 & 0.1263 & 1769.1342 & -- & -- & $5.4{\times}10^{-8}$ & -- \\
Muennighoff & 0.1363 & 0.4788 & 0.9823 & 0.1926 & --     & 1.6310 & $R_N^\star=90.51$, $R_D^\star=0.008$ \\
SoftQ       & 128.7280 & 0.1287 & 295.7854 & -- & 0.7853 & $1.9{\times}10^{-6}$ & -- \\
\bottomrule
\end{tabular}
\end{table}

\begin{table}[H]
\centering
\caption{Fitted constants on the external grid provided by \cite{kim2026pretraining}.}
\label{tab:app-paper16-params}
\small
\setlength{\tabcolsep}{3.5pt}
\begin{tabular}{lrrrrrrl}
\toprule
Law & $A$ & $\alpha$ & $B$ & $\beta$ & $\rho$ & $E$ & Extra \\
\midrule
Chinchilla  & 0.0543 & 1.1551 & 0.3659 & 0.4594 & --     & 2.6590 & -- \\
Quanta      & 0.1342 & 0.4959 & 0.4205 & -- & --     & 2.3197 & -- \\
Muennighoff & 0.0543 & 1.1551 & 0.3659 & 0.4594 & --     & 2.6590 & $R_N^\star=2.13$, $R_D^\star=0.096$ \\
SoftQ       & 0.0613 & 0.5905 & 0.2565 & -- & 1.4468 & 2.4360 & -- \\
\bottomrule
\end{tabular}
\end{table}

\subsection{MIR Data-Efficiency Calculation}

\begin{table}[H]
\centering
\caption{MIR parameter-scaling fits used for data-efficiency estimation.}
\label{tab:app-mir-param-fits}
\small
\begin{tabular}{rrrr}
\toprule
MIR unique data & $A_U$ & $\alpha_U$ & MIR asymptote $E_U$ \\
\midrule
100M & 0.03829 & 0.82186 & 3.27997 \\
200M & 0.13293 & 0.49592 & 2.95596 \\
300M & 0.13939 & 0.51307 & 2.83953 \\
400M & 0.15617 & 0.51006 & 2.74826 \\
\bottomrule
\end{tabular}
\end{table}

Using the full-DCLM SoftQ fit, the baseline infinite-model curve is
\[
L_{\mathrm{Reg},\infty}(U)=0.30565+2.24905\,U^{-0.12476},
\]
where $U$ is in billions.  Solving
$L_{\mathrm{Reg},\infty}(U_{\mathrm{eq}})=E_U$ gives the data-efficiency ratios
reported in Table~\ref{tab:mir-efficiency-softq-main}.

\begin{table}[H]
\centering
\caption{MIR unique-data efficiency relative to the strongly regularized baseline,
using SoftQ to model the baseline infinite-model curve.}
\label{tab:mir-efficiency-softq-main}
\small
\begin{tabular}{rrrr}
\toprule
MIR unique data & MIR asymptote $E_U$ & Baseline-equivalent $U_{\mathrm{eq}}$ & Data efficiency \\
\midrule
100M & 3.27997 & 106.4M & 1.06$\times$ \\
200M & 2.95596 & 268.2M & 1.34$\times$ \\
300M & 2.83953 & 384.5M & 1.28$\times$ \\
400M & 2.74826 & 515.9M & 1.29$\times$ \\
\bottomrule
\end{tabular}
\end{table}

\subsection{Sensitivity Analyses}
\label{sec:app-alt-efficiency-laws}

The original Muennighoff paper fixes the base Chinchilla coefficients to a
C4-calibrated law and fits only $R_N^\star,R_D^\star$.  Because those base
coefficients are on a different corpus and loss scale, they are not the main
comparison in this paper.  Table~\ref{tab:app-muennighoff-fixed-c4} reports the
literal fixed-C4 variant for completeness.

\begin{table}[H]
\centering
\caption{Literal fixed-C4 Muennighoff variant.  This fixes the base Chinchilla
law to the coefficients from \citet{niklas2023} and fits only
$R_N^\star,R_D^\star$.}
\label{tab:app-muennighoff-fixed-c4}
\small
\begin{tabular}{lrrrr}
\toprule
Dataset / split & RMSE & MAE & AIC & Notes \\
\midrule
DCLM full fit & 0.060257 & 0.047360 & -108.37 & $R_N^\star=119.82$, $R_D^\star=9.995$ \\
DCLM held-out $400$M & 0.071734 & 0.064728 & -- & train RMSE $=0.055268$ \\
Kim et al full fit & 0.120231 & 0.098342 & -63.79 & $R_N^\star=10^8$, $R_D^\star=0.927$ \\
\bottomrule
\end{tabular}
\end{table}

\begin{table}[H]
\centering
\caption{MIR data efficiency under each fitted regularized-baseline law.  We
fit each baseline law on the same 20 DCLM regularized points.  $U_{\mathrm{eq}}$
is the amount of unique data the corresponding regularized-baseline
infinite-model curve needs to match the MIR asymptote $E_U$.}
\label{tab:app-mir-efficiency-all-laws}
\small
\setlength{\tabcolsep}{3.2pt}
\begin{tabular}{lrrrrrrrr}
\toprule
& \multicolumn{2}{c}{Chinchilla} & \multicolumn{2}{c}{Quanta} & \multicolumn{2}{c}{Muennighoff} & \multicolumn{2}{c}{SoftQ} \\
\cmidrule(lr){2-3}\cmidrule(lr){4-5}\cmidrule(lr){6-7}\cmidrule(lr){8-9}
MIR $U$ & $U_{\mathrm{eq}}$ & Eff. & $U_{\mathrm{eq}}$ & Eff. & $U_{\mathrm{eq}}$ & Eff. & $U_{\mathrm{eq}}$ & Eff. \\
\midrule
100M & 69.5M  & 0.70$\times$ & 115.0M & 1.15$\times$ & 92.4M  & 0.92$\times$ & 106.4M & 1.06$\times$ \\
200M & 211.1M & 1.06$\times$ & 294.7M & 1.47$\times$ & 280.6M & 1.40$\times$ & 268.2M & 1.34$\times$ \\
300M & 350.6M & 1.17$\times$ & 424.9M & 1.42$\times$ & 466.1M & 1.55$\times$ & 384.5M & 1.28$\times$ \\
400M & 554.3M & 1.39$\times$ & 572.7M & 1.43$\times$ & 737.0M & 1.84$\times$ & 515.9M & 1.29$\times$ \\
\bottomrule
\end{tabular}
\end{table}

The main paper reports MIR data efficiency using SoftQ because it is the selected
baseline law by full-fit AIC, held-out prediction, and the external
check using data from \cite{kim2026pretraining}.  For completeness, we also compute the same quantity under the
Chinchilla, Quanta, and Muennighoff-style laws fitted on the full DCLM
strongly regularized baseline grid.  The Chinchilla, Quanta, and SoftQ fits use
the Approach-3 log-Huber objective in Eq.~\eqref{eq:app-approach3}.  The
Muennighoff-style fit uses the two-stage protocol described above: fit the base
Chinchilla coefficients on the same DCLM grid, then hold those
coefficients fixed and fit only $R_N^\star$ and $R_D^\star$.

For each law, we define the regularized-baseline infinite-model curve
$L_{\mathrm{Reg},\infty}(U)$ and solve
$L_{\mathrm{Reg},\infty}(U_{\mathrm{eq}})=E_U$, where $E_U$ is the MIR
parameter-scaling asymptote in Table~\ref{tab:app-mir-param-fits}.  The data
efficiency ratio is $U_{\mathrm{eq}}/U_{\mathrm{MIR}}$.  For Chinchilla,
Quanta, and SoftQ, the infinite-model curves are obtained by taking
$N\to\infty$.  For the Muennighoff-style law, we take both $N\to\infty$ and the
saturated repeated-data limit $R_D\to\infty$, giving
\[
L_{\mathrm{M},\infty}(U)
= E+
\frac{A}{\{(1+R_N^\star)N_{\mathrm{opt}}(U)\}^{\alpha}}
+\frac{B}{\{(1+R_D^\star)U\}^{\beta}},
\]
where
\[
N_{\mathrm{opt}}(U)=\left(\frac{\alpha A}{\beta B}\right)^{1/\alpha}
U^{\beta/\alpha}.
\]
With the fitted constants in Table~\ref{tab:app-dclm-params}, the resulting
one-dimensional curves are
\begin{align*}
L_{\mathrm{Ch},\infty}(U) &= 2.11164+0.53575\,U^{-0.29241},\\
L_{\mathrm{Q},\infty}(U) &= 0.22834+2.35787\,U^{-0.11924},\\
L_{\mathrm{M},\infty}(U) &= 2.11164+0.58227\,U^{-0.29241},\\
L_{\mathrm{SoftQ},\infty}(U) &= 0.30565+2.24905\,U^{-0.12476},
\end{align*}
where $U$ is measured in billions of unique tokens.

The alternative-law estimates vary substantially.  In particular, the additive
Chinchilla fit gives a sub-unity ratio at 100M because its infinite-model curve
already predicts a loss below the MIR asymptote at 100M, which is another
symptom of the decoupled law being misspecified in this regime.  Quanta and
Muennighoff generally produce larger ratios than SoftQ at 200M--400M, while
SoftQ gives the most conservative estimates among the coupled laws that also
passed the held-out and external-data checks.  For this reason, we keep the
SoftQ-based ratios as the main-text estimate and report the other laws only as
sensitivity analyses. 
We also observe that the data efficiency ratio difference under different scaling laws generally shrinks as the unique data size increases.

\subsection{Additional Visualizations}
Figure~\ref{fig:app-softq-baseline-fit} gives the absolute-loss view of the
Chinchilla and SoftQ fits. Figures~\ref{fig:app-dclm-baseline-vs-data-size}
and~\ref{fig:app-dclm-scaling-8curves} provide additional views of the baseline
and MIR scaling results.

\begin{figure}[H]
    \centering
    \begin{subfigure}[t]{0.48\linewidth}
        \centering
        \includegraphics[width=\linewidth]{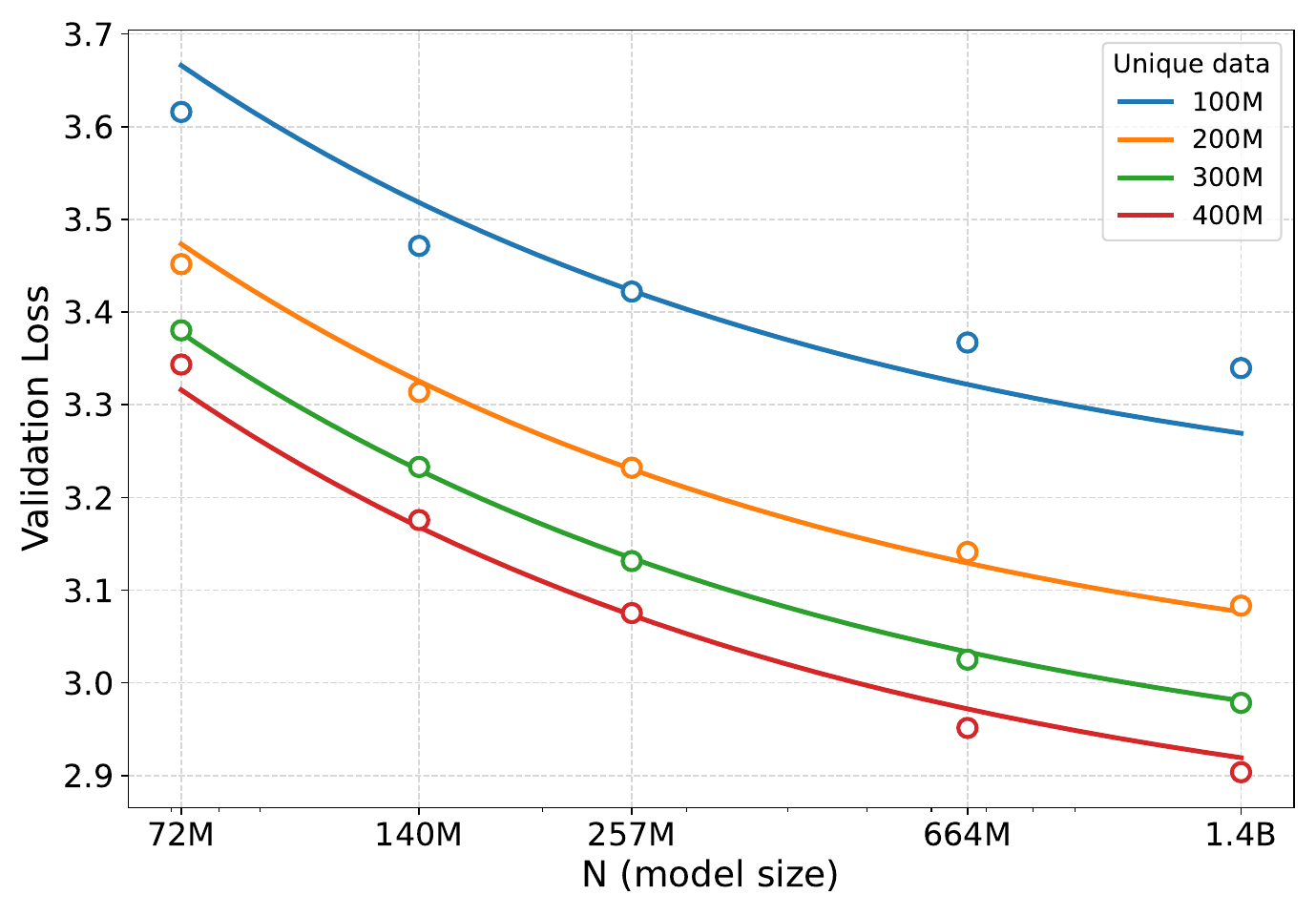}
    \end{subfigure}\hfill
    \begin{subfigure}[t]{0.48\linewidth}
        \centering
        \includegraphics[width=\linewidth]{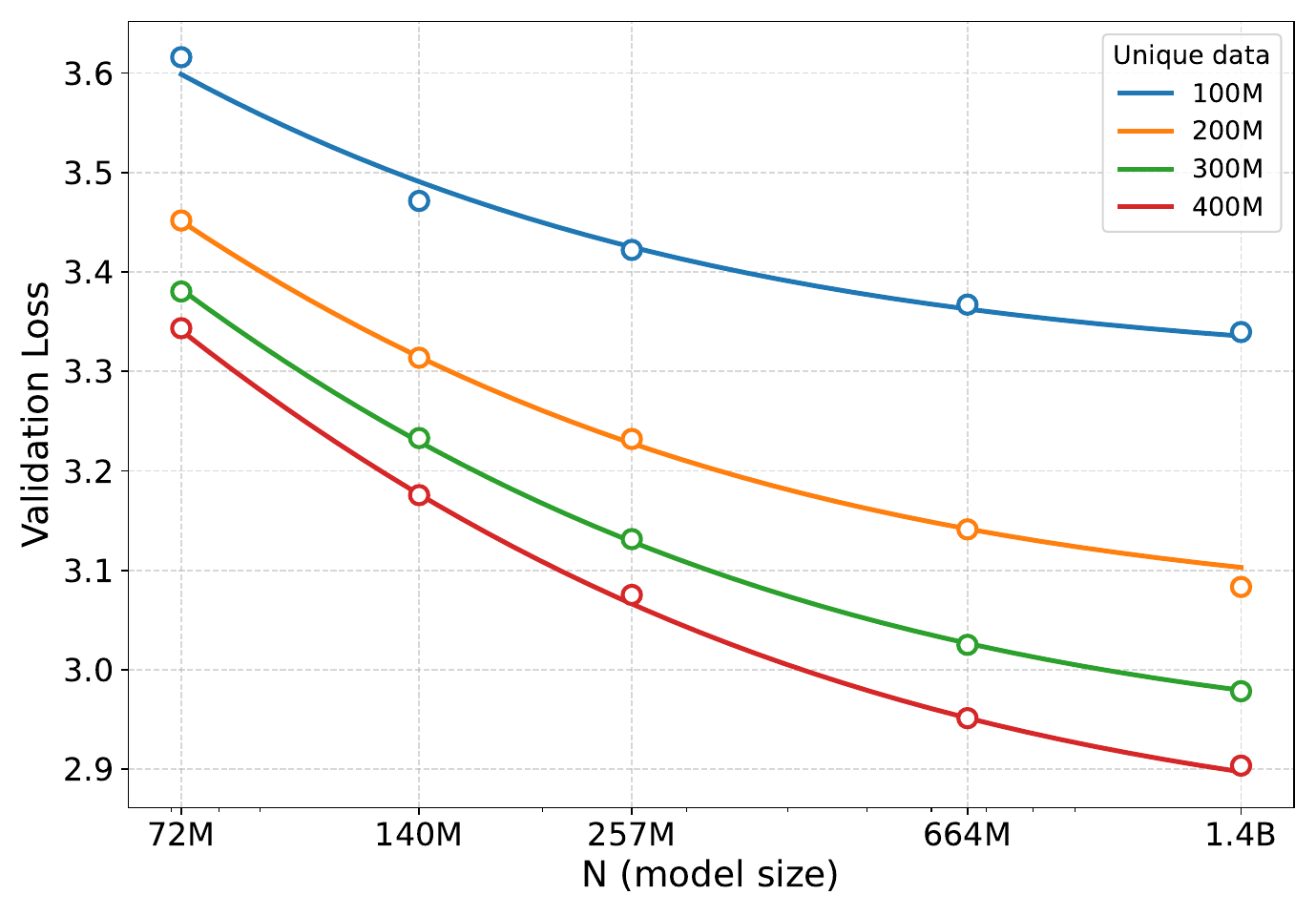}
    \end{subfigure}
    \caption{Absolute-loss view of the fitted Chinchilla and SoftQ laws on the
    \(20\) strongly regularized baseline points. Left: Chinchilla fit. Right:
    SoftQ fit. Points are observed validation losses and curves are model
    predictions.}
    \label{fig:app-softq-baseline-fit}
\end{figure}

\begin{figure}[H]
    \centering
    \includegraphics[width=0.62\linewidth]{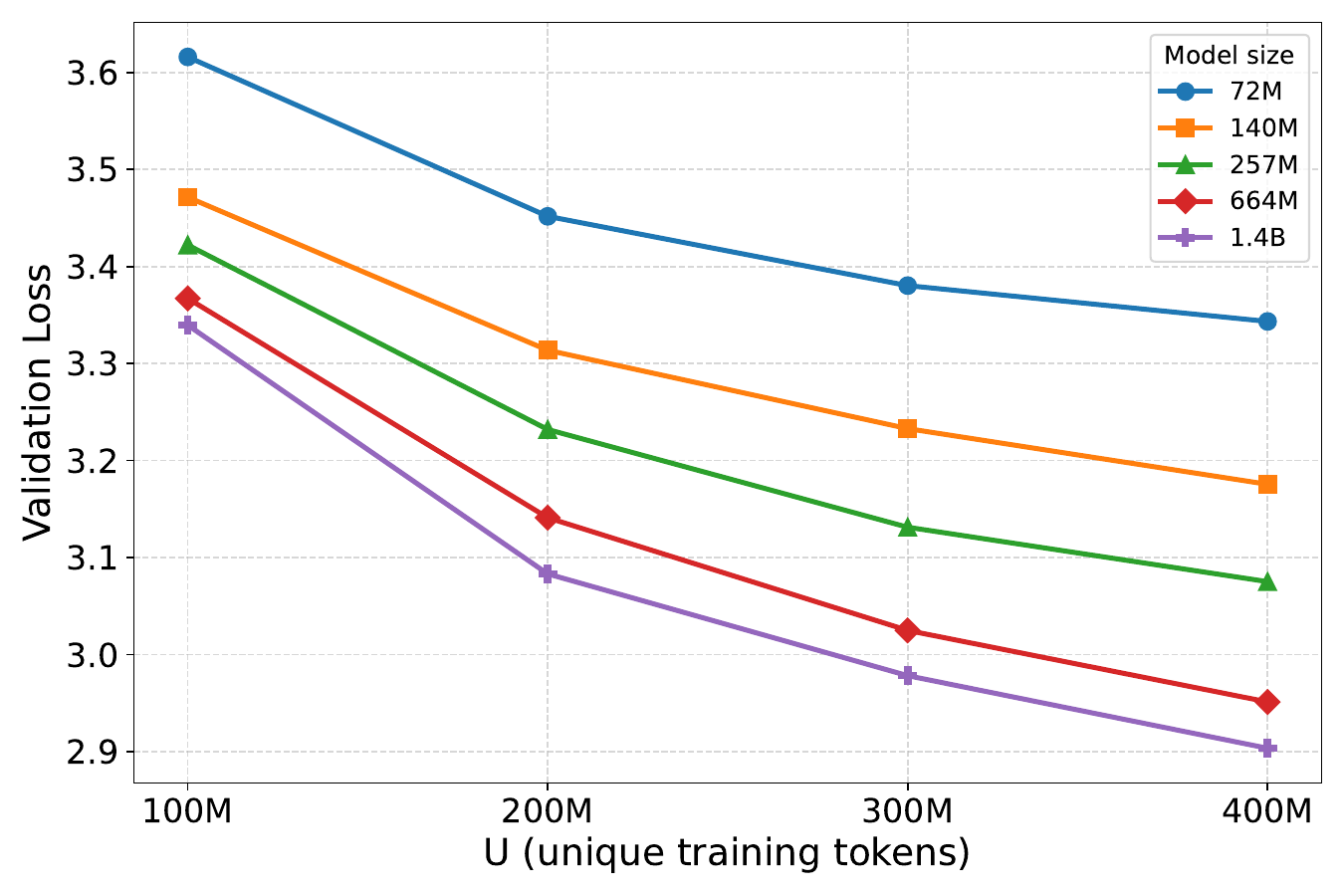}
    \caption{Regularized baseline validation loss as a function of unique training
    data size $U$ across five model sizes.  The changing separation between curves
    contradicts the data-independent model-size gap implied by the additive
    Chinchilla form.}
    \label{fig:app-dclm-baseline-vs-data-size}
\end{figure}

\begin{figure}[H]
    \centering
    \includegraphics[width=0.72\linewidth]{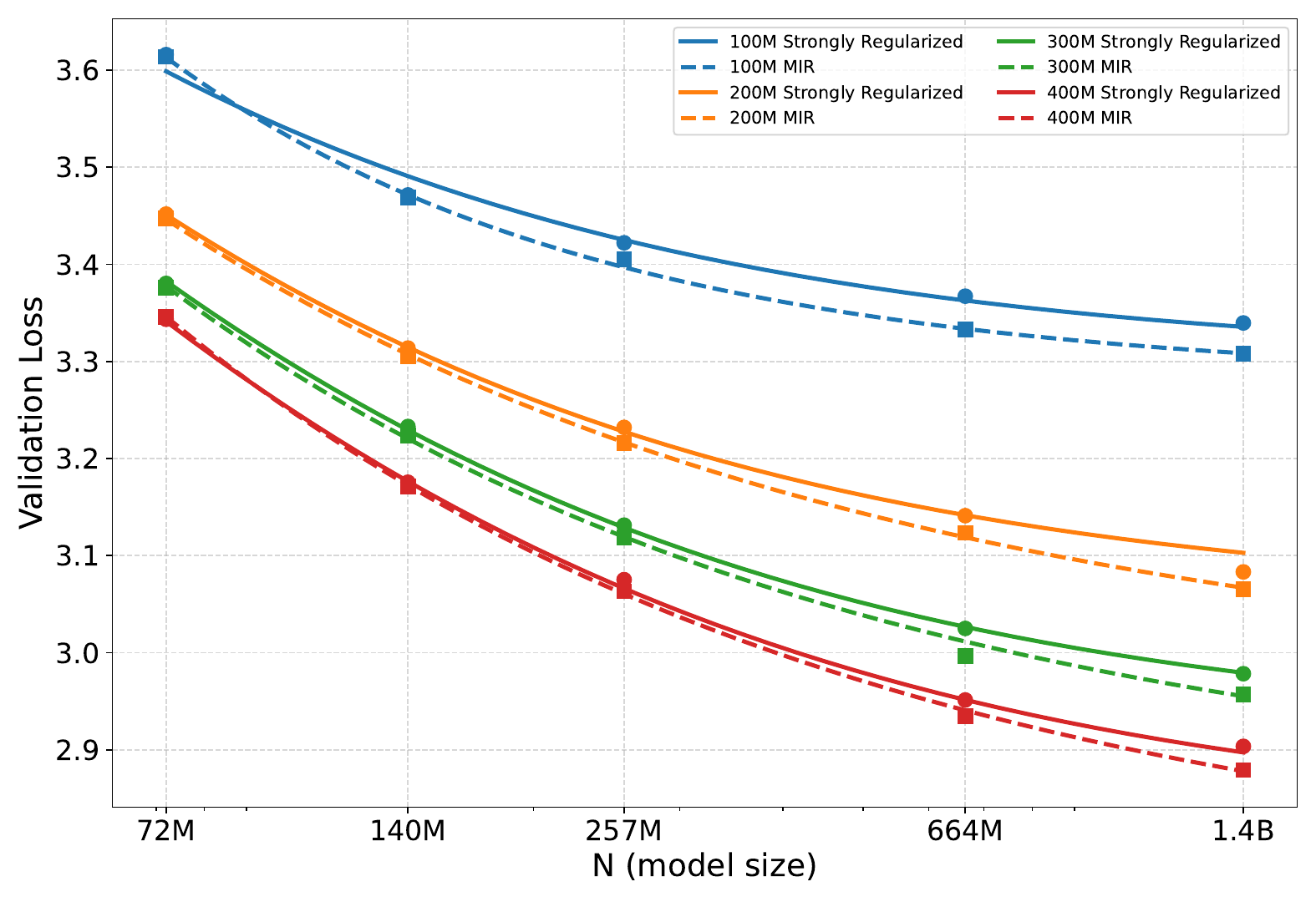}
    \caption{Scaling curves across four unique-data budgets for the strongly
    regularized baseline and MIR.  MIR improves validation loss at most model-size
    and data-budget pairs, and the asymptotic fits in Table~\ref{tab:app-mir-param-fits}
    quantify the infinite-model limit.}
    \label{fig:app-dclm-scaling-8curves}
\end{figure}

\clearpage

\section{Why Masking Reduces Memorization: A Toy Model}
\label{sec:theory}

This section gives a toy model for the intuition stated in Section \ref{sec:theo:intuition} of the main text:
masked-input regularization reduces validation loss by reducing dependence on
context-specific components (noise) and preserving a signal through generalizable
components. The intention in this section is not to model
transformer pretraining in full, but to isolate one mechanism that becomes
important in the data-constrained, compute-rich regime.

We decompose each training sequence into three parts: a context-specific
component, a generalizable component, and an output token. The context-specific
component can identify individual training examples and therefore enables
memorization. The generalizable component contains predictive information that
also appears in validation examples. A sufficiently large model can fit the
finite training set through the context-specific component alone; however, this
fit does not transfer to validation examples with unseen context-specific
components. Masking changes this because it can sometimes hide the context-specific component while leaving the generalizable component visible. On such masked inputs, prediction through memorization is unavailable, so the model is encouraged to learn predictive patterns from the generalizable component. Specifically, we introduce the following context-specific noise model.

\begin{definition}[Context-Specific Noise Model]
\label{def:context-specific-noise-model}
The training set consists of examples
\[
    (C_i,S_i,Y_i), ~~ i=1,\ldots,n,
\]
where \(C_i\) is a context-specific component, \(S_i\in\mathbb{R}^d\) is a
generalizable component, and \(Y_i\in\{-1,+1\}\) is the output token to be
predicted. In this model, we consider binary prediction for simplicity and clarity. We assume
\[
    \|S_i\|_2\le B.
\]
The population validation distribution has the same joint distribution of
\((S,Y)\), but its context-specific components are unseen during training.

Let
\[
    \mu := \mathbb{E}[YS]\in\mathbb{R}^d,
    ~~
    \Sigma := \mathbb{E}[SS^\top]\in\mathbb{R}^{d\times d},
\]
and assume \(\mu\neq 0\). On the finite training set, define
\[
    \widehat \mu := \frac1n\sum_{i=1}^n Y_iS_i,
    ~~
    \widehat \Sigma := \frac1n\sum_{i=1}^n S_iS_i^\top .
\]
Let \(\mathbf S\in\mathbb{R}^{n\times d}\) be the matrix with \(i\)-th row
\(S_i^\top\), and let \(Y=(Y_1,\ldots,Y_n)^\top\).
\end{definition}

\begin{definition}[Clean and MIR Objectives]
\label{def:clean-mir-objectives}
For model size \(m\), let
\[
    \phi_i=\phi_m(C_i)\in\mathbb{R}^m,
    ~~
    \Phi_m=(\phi_1,\ldots,\phi_n)^\top\in\mathbb{R}^{n\times m},
    ~~
    G_m=\Phi_m\Phi_m^\top .
\]
The model prediction score on example \(i\) is
\[
    \theta_i=\phi_i^\top w+b^\top S_i,
\]
where \(w\in\mathbb{R}^m\) models the context-specific memorization component
and \(b\in\mathbb{R}^d\) models the generalizable component.
We consider squared and logistic losses,
\[
    \ell_{\rm sq}(y,\theta)=\frac12(y-\theta)^2,
    ~~
    \ell_{\rm log}(y,\theta)=\log(1+\exp(-y\theta)).
\]
To model masking, let \(r\in[0,1]\) be a sampled mask ratio. Conditional on
\(r\), let \(V_{C,i},V_{S,i}\in\{0,1\}\) be independent visibility indicators
with
\[
    \mathbb{P}(V_{C,i}=1\mid r)=\mathbb{P}(V_{S,i}=1\mid r)=1-r .
\]
The masked prediction score on example \(i\) is then 
\[
    V_{C,i}\phi_i^\top w+V_{S,i}b^\top S_i .
\]
Thus masking may remove the context-specific component, the generalizable
component, both, or neither.
For loss \(\ell\in\{\ell_{\rm sq}, \ell_{\rm log}\}\), the clean objective is
\[
\widehat J_{\ell,{\rm clean}}^{(m)}(w,b)
=
\frac1n\sum_{i=1}^n
\ell\!\left(Y_i,\phi_i^\top w+b^\top S_i\right)
+
\frac{\rho_w}{2n}\|w\|_2^2
+
\frac{\rho_b}{2}\|b\|_2^2 ,
\]
and the MIR objective is
\[
\widehat J_{\ell,{\rm MIR}}^{(m)}(w,b)
=
\widehat J_{\ell,{\rm clean}}^{(m)}(w,b)
+
\frac{\lambda}{n}\sum_{i=1}^n
\mathbb{E}_M
\left[
\ell\!\left(
Y_i,
V_{C,i}\phi_i^\top w+V_{S,i}b^\top S_i
\right)
\right],
\]
where the expectation is over the masking randomness.
\end{definition}

\begin{assumption}[Growing Context-Specific Capacity]
\label{ass:growing-context-capacity}
For every \(m\), \(G_m\succ 0\). Let $
    a_m:=\lambda_{\min}(G_m)$,
then $
    a_m\to\infty
    ~\text{as}~
    m\to\infty $.
\end{assumption}

Assumption \ref{ass:growing-context-capacity} captures the data-constrained, compute-rich regime: the number
of training examples is fixed, while the capacity of the context-specific
memorization component grows. In this regime, for any fixed vector of prediction scores on the finite training set, the context-specific component can represent that vector with vanishing regularization cost as \(m\to\infty\).

\begin{theorem}[Behavior of the generalizable component in Clean and MIR training]
\label{thm:mir-generalization}
Let
\[
    h:=\mathbb{E}[(1-r)^2],
    ~~
    q:=\mathbb{E}[r(1-r)],
    ~~
    \beta:=\lambda q,
\]
and assume \(\beta>0\). Define
\[
    \alpha:=1+\lambda h,
    ~~
    \delta:=\alpha+\beta,
    ~~
    \eta:=\delta-\frac{\alpha^2}{\delta}
    =
    \frac{\beta(\delta+\alpha)}{\delta}.
\]
Under Assumption~\ref{ass:growing-context-capacity}, let
\(b_{{\rm clean},{\rm sq}}^{(m)}\), \(b_{{\rm MIR},{\rm sq}}^{(m)}\),
\(b_{{\rm clean},{\rm log}}^{(m)}\), and
\(b_{{\rm MIR},{\rm log}}^{(m)}\) denote the \(b\)-coordinates of minimizers of
the corresponding objectives. Then, as \(m\to\infty\),
\[
    b_{{\rm clean},{\rm sq}}^{(m)}\to 0,
    ~~
    b_{{\rm MIR},{\rm sq}}^{(m)}
    \to
    \bar b_{\rm sq}
    :=
    \beta
    \left(\rho_b I_d+\eta\widehat\Sigma\right)^{-1}
    \widehat\mu .
\]
For logistic loss,
\[
    b_{{\rm clean},{\rm log}}^{(m)}\to 0,
    ~~
    b_{{\rm MIR},{\rm log}}^{(m)}\to \bar b_{\rm log},
\]
where \(\bar b_{\rm log}\) is the unique minimizer of
\[
    b\mapsto
    \beta\,
    \frac1n\sum_{i=1}^n
    \log\!\left(1+\exp(-Y_i b^\top S_i)\right)
    +
    \frac{\rho_b}{2}\|b\|_2^2 .
\]
Moreover, if \(\widehat\mu\neq 0\), then $
    \widehat\mu^\top \bar b_{\rm sq}>0$ and $
    \bar b_{\rm sq}\neq 0 $.
For logistic loss, if \(\widehat\mu\neq 0\), then
\[
    \bar b_{\rm log}\neq 0,
    ~~
    \|\bar b_{\rm log}\|_2\le \frac{\beta B}{\rho_b}.
\]
\end{theorem}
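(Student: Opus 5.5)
The plan is to eliminate $w$ by working with the vector of memorization scores on the training set, $u:=\Phi_m w\in\mathbb{R}^n$. Since $G_m\succ 0$, every $u\in\mathbb{R}^n$ is attained, and the cheapest way to attain it is
\[
\min\{\|w\|_2^2:\Phi_m w=u\}=u^\top G_m^{-1}u\le \|u\|_2^2/a_m .
\]
Both objectives depend on $w$ only through $u$ and $\|w\|_2^2$. Hence the minimizers in $b$ coincide with those of reduced objectives of the form
\[
F_m(u,b)=F_\infty(u,b)+\frac{\rho_w}{2n}u^\top G_m^{-1}u .
\]
Here $F_\infty$ collects the data terms and $\tfrac{\rho_b}{2}\|b\|_2^2$. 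By Assumption~\ref{ass:growing-context-capacity}, the extra penalty is at most $\rho_w\|u\|_2^2/(2na_m)\to 0$ uniformly on bounded sets. Minimizers of the original objectives exist because $\rho_w,\rho_b>0$ make them coercive. The strategy is to compute $g(b):=\inf_u F_\infty(u,b)$ in each case, show $g$ is $\rho_b$-strongly convex, and show that $b^{(m)}$ is an $o(1)$-minimizer of $g$. Strong convexity then forces $b^{(m)}\to\arg\min g$.

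\textbf{Expanding the MIR loss.} Write $s_i=b^\top S_i$. Conditional on $r$, the two visibility indicators are independent. Hence both visible has probability $h$, exactly one visible has probability $q$ each, and none visible has probability $\mathbb{E}[r^2]$. Per example, the MIR term therefore equals
\[
\lambda\big[h\,\ell(Y_i,u_i+s_i)+q\,\ell(Y_i,u_i)+q\,\ell(Y_i,s_i)+\mathbb{E}[r^2]\,\ell(Y_i,0)\big].
\]
The last term is a constant.

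\textbf{Squared loss.} The $u_i$-dependent part of the per-example loss is $\tfrac{\alpha}{2}(Y_i-u_i-s_i)^2+\tfrac{\beta}{2}(Y_i-u_i)^2$, plus $\tfrac{\beta}{2}(Y_i-s_i)^2$. Set $e=Y_i-u_i$. Minimizing over $e$ gives $\tfrac{\alpha\beta}{2\delta}s_i^2$. Using $Y_i^2=1$, the result is
\[
g(b)=\frac1n\sum_i\Big[\tfrac{\eta}{2}s_i^2-\beta Y_is_i\Big]+\tfrac{\rho_b}{2}\|b\|_2^2+\text{const}.
\]
Its minimizer is $\bar b_{\rm sq}=\beta(\rho_bI+\eta\widehat\Sigma)^{-1}\widehat\mu$. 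The clean case is the same computation with $\lambda=0$. There $u_i=Y_i-s_i$ achieves zero loss, so $g(b)=\tfrac{\rho_b}{2}\|b\|_2^2$ and the limit is $0$.

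Everything is quadratic here, so convergence is direct. One can solve the normal equations of $F_m$ exactly, since $G_m^{-1}\to0$ enters as a vanishing positive semidefinite perturbation. Alternatively, one can use the generic argument given next.

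\textbf{Logistic loss and the main obstacle.} The main difficulty is the logistic case. There $\inf_u F_\infty$ is not attained: for visible-context terms one wants $u_i=tY_i$ with $t\to\infty$. The candidate limit functions are as follows:
\begin{itemize}
\item clean: $g(b)=\tfrac{\rho_b}{2}\|b\|_2^2$;
\item MIR: $g(b)=\beta\frac1n\sum_i\log(1+e^{-Y_is_i})+\tfrac{\rho_b}{2}\|b\|_2^2+\lambda\mathbb{E}[r^2]\log 2$.
\end{itemize}
The terms involving $u_i$ are nonnegative and tend to $0$ as $t\to\infty$.

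\emph{Lower bound.} Dropping the nonnegative penalty gives $F_m\ge g(b)$ for all $(u,b)$.

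\emph{Upper bound.} Evaluate $F_m$ at $(u,b)=(t_mY,\bar b)$ with $t_m\to\infty$ slowly, for instance $t_m=a_m^{1/4}$. This gives
\[
\min F_m\le g(\bar b)+O(e^{-t_m+c})+\frac{\rho_w t_m^2}{2a_m}=\min g+o(1).
\]
Here $c$ bounds $|s_i|$ for fixed $\bar b$.

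Combining the two bounds, $g(b^{(m)})\le\min g+o(1)$. Strong convexity then gives $\|b^{(m)}-\bar b\|_2^2\le \tfrac{2}{\rho_b}\,o(1)\to0$. In the clean case $\bar b=0$. Uniqueness of $\bar b_{\rm log}$ also follows from strong convexity.

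\textbf{Final claims.} For squared loss, $M=\rho_bI+\eta\widehat\Sigma\succ0$ because $\eta>0$ when $\beta>0$. Hence
\[
\widehat\mu^\top\bar b_{\rm sq}=\beta\,\widehat\mu^\top M^{-1}\widehat\mu>0,
\]
and in particular $\bar b_{\rm sq}\neq0$.

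For logistic loss, the gradient of $g$ at $b=0$ is $-\tfrac{\beta}{2}\widehat\mu\neq0$, so $\bar b_{\rm log}\neq0$. The first-order condition reads
\[
\rho_b\bar b=\frac{\beta}{n}\sum_i Y_iS_i\,\sigma(-Y_i\bar b^\top S_i),
\]
where $\sigma$ is the sigmoid. Since $0<\sigma<1$ and $\|S_i\|_2\le B$, this yields $\|\bar b_{\rm log}\|_2\le\beta B/\rho_b$.
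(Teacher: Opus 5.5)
Your proposal is correct and follows the paper's proof essentially step for step. It uses the same reduction to $u=\Phi_m w$ with penalty $\frac{\rho_w}{2n}u^\top G_m^{-1}u$ and the same four-case expansion producing $\alpha$ and $\beta$. For logistic loss it uses the same sandwich between $F_{\rm log}(b)+\lambda\,\mathbb{E}[r^2]\log 2$ and the value at $(\tau_m Y,\bar b_{\rm log})$ with $\tau_m^2/a_m\to0$, closed by strong convexity. The only variation is for squared loss: you profile out $u$ at $m=\infty$ and reuse the sandwich argument, whereas the paper solves the finite-$m$ normal equations in closed form and lets $T_m\to0$ and $M_m\to\delta^{-1}I_n$, a formula it later reuses in the proof of Theorem~\ref{thm:compute-dependent-gains}.
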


The theorem formalizes the memorization effect. Clean training can fit the
finite training set through the context-specific component alone, so the
coefficient on the generalizable component vanishes as context-specific capacity
grows. MIR does not have this degeneracy: because masking sometimes hides the
context-specific component, the limiting objective retains a nonzero training
signal for the generalizable component.

\begin{assumption}[Validation Contexts Are Unseen]
\label{ass:unseen-contexts}
For validation examples, the context-specific memorization features learned on
the training set are unavailable. We model this as
\[
    \phi_m(C_{\rm val})=0 .
\]
Therefore validation predictions depend only on the generalizable logit
\(b^\top S\).
\end{assumption}

This assumption does not say that validation text contains no patterns related
to the training text. It says only that the example-specific context features
used to memorize the finite training corpus do not transfer to unseen validation
examples.

\begin{theorem}[MIR Improves Validation Risk]
\label{thm:mir-improves-validation-risk}
Under the assumptions of Theorem~\ref{thm:mir-generalization} and
Assumption~\ref{ass:unseen-contexts}, define
\[
    R_{\rm sq}(b)
    :=
    \mathbb{E}\!\left[(Y-b^\top S)^2\right]
    =
    1-2\mu^\top b+b^\top\Sigma b,
\]
and
\[
    R_{\rm log}(b)
    :=
    \mathbb{E}\!\left[
    \log\!\left(1+\exp(-Yb^\top S)\right)
    \right].
\]

For squared loss, if
$
    2\mu^\top \bar b_{\rm sq}
    -
    \bar b_{\rm sq}^\top\Sigma \bar b_{\rm sq}
    >
    0$,
then, for all sufficiently large \(m\),
\[
    R_{\rm sq}\!\left(b_{{\rm MIR},{\rm sq}}^{(m)}\right)
    <
    R_{\rm sq}\!\left(b_{{\rm clean},{\rm sq}}^{(m)}\right).
\]
This condition holds automatically when $
    \widehat\mu=\mu
    $ and $
    \widehat\Sigma=\Sigma $.
For logistic loss, if $
    \mu^\top \bar b_{\rm log}
    >
    \frac{B^2}{4}\|\bar b_{\rm log}\|_2^2$,
then, for all sufficiently large \(m\),
\[
    R_{\rm log}\!\left(b_{{\rm MIR},{\rm log}}^{(m)}\right)
    <
    R_{\rm log}\!\left(b_{{\rm clean},{\rm log}}^{(m)}\right).
\]
In particular, this logistic condition holds for sufficiently small
\(\beta/\rho_b\) whenever \(\mu^\top\widehat\mu>0\).
Moreover, defining
\[
    \Delta_{{\rm sq},m}
    :=
    R_{\rm sq}\!\left(b_{{\rm clean},{\rm sq}}^{(m)}\right)
    -
    R_{\rm sq}\!\left(b_{{\rm MIR},{\rm sq}}^{(m)}\right),
\]
and
\[
    \Delta_{{\rm log},m}
    :=
    R_{\rm log}\!\left(b_{{\rm clean},{\rm log}}^{(m)}\right)
    -
    R_{\rm log}\!\left(b_{{\rm MIR},{\rm log}}^{(m)}\right),
\]
we have
\[
    \Delta_{{\rm sq},m}
    \to
    R_{\rm sq}(0)-R_{\rm sq}(\bar b_{\rm sq})
    >
    0
\]
under the squared-loss condition, and
\[
    \Delta_{{\rm log},m}
    \to
    R_{\rm log}(0)-R_{\rm log}(\bar b_{\rm log})
    >
    0
\]
under the logistic-loss condition.
\end{theorem}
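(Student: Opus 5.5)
Theorem~\ref{thm:mir-improves-validation-risk} follows from Theorem~\ref{thm:mir-generalization}, Assumption~\ref{ass:unseen-contexts}, and continuity of the population risks. By Assumption~\ref{ass:unseen-contexts}, validation predictions are $b^\top S$, so the validation risks are $R_{\rm sq}(b)$ and $R_{\rm log}(b)$. Both are continuous in $b$. For $R_{\rm sq}$ this is immediate, since it is a quadratic. For $R_{\rm log}$, note that $|Yb^\top S|\le \|b\|_2B$, so the integrand is bounded on bounded sets, and dominated convergence applies.

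Theorem~\ref{thm:mir-generalization} gives $b^{(m)}_{\rm clean}\to 0$ and $b^{(m)}_{\rm MIR}\to\bar b$ for each loss. Continuity then gives $\Delta_{\cdot,m}\to R(0)-R(\bar b)$. This proves the limit statements once the limit is shown to be strictly positive. Strict positivity of the limit in turn implies $\Delta_{\cdot,m}>0$ for all sufficiently large $m$, which is the strict inequality claimed.

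\textbf{Squared loss.} We have $R_{\rm sq}(0)-R_{\rm sq}(\bar b_{\rm sq})=2\mu^\top\bar b_{\rm sq}-\bar b_{\rm sq}^\top\Sigma\bar b_{\rm sq}$, which is positive by hypothesis. For the automatic case $\widehat\mu=\mu$, $\widehat\Sigma=\Sigma$, write $M=\rho_bI+\eta\Sigma\succ0$, so that $\bar b=\beta M^{-1}\mu$. Then
\[
2\mu^\top\bar b-\bar b^\top\Sigma\bar b=\beta\,\mu^\top M^{-1}\bigl(2M-\beta\Sigma\bigr)M^{-1}\mu .
\]
Here $2M-\beta\Sigma=2\rho_bI+(2\eta-\beta)\Sigma$. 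We check that $2\eta\ge\beta$: indeed $\eta=\beta(\delta+\alpha)/\delta$ and $(\delta+\alpha)/\delta\ge1$, so in fact $\eta\ge\beta$. Hence $2M-\beta\Sigma\succ0$. Since $\mu\ne0$ and $\beta>0$, the expression is strictly positive.

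\textbf{Logistic loss.} Set $f(t)=\log(1+e^{-t})$, so that $f(0)=\log 2$, $f'(0)=-1/2$, and $f''\le1/4$. A second-order Taylor bound gives $f(t)\le \log2-t/2+t^2/8$. Applying it with $t=Y\bar b^\top S$ and using $\|S\|\le B$, we get
\[
R_{\rm log}(\bar b)\le\log2-\tfrac12\mu^\top\bar b+\tfrac18B^2\|\bar b\|^2 .
\]
Therefore $R_{\rm log}(0)-R_{\rm log}(\bar b)\ge\tfrac12\bigl(\mu^\top\bar b-\tfrac{B^2}{4}\|\bar b\|^2\bigr)>0$ under the stated condition.

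\textbf{Small $\beta/\rho_b$.} This is the main obstacle. Let $\gamma=\beta/\rho_b$. Dividing the objective by $\rho_b$ shows that $\bar b_{\rm log}$ minimizes $\gamma\widehat F(b)+\tfrac12\|b\|^2$, where $\widehat F$ is the empirical logistic loss. The first-order condition is $\bar b=-\gamma\nabla\widehat F(\bar b)$. Using $\nabla\widehat F(b)=-\frac1n\sum_i Y_iS_i\sigma(-Y_ib^\top S_i)$, we can write
\[
\bar b=\tfrac{\gamma}{2}\widehat\mu+\gamma\, e,
\]
where $e$ collects the deviations of $\sigma(-Y_i\bar b^\top S_i)$ from $1/2$. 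Since $\sigma$ is $\tfrac14$-Lipschitz, $\|e\|\le \tfrac14 B^2\|\bar b\|$. Combined with the bound $\|\bar b\|\le\gamma B$ from Theorem~\ref{thm:mir-generalization}, this gives $\|e\|=O(\gamma)$ and hence $\bar b=\tfrac{\gamma}{2}\widehat\mu+O(\gamma^2)$.

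It follows that $\mu^\top\bar b=\tfrac{\gamma}{2}\mu^\top\widehat\mu+O(\gamma^2)$, while $\tfrac{B^2}{4}\|\bar b\|^2=O(\gamma^2)$. Since $\mu^\top\widehat\mu>0$, the linear term dominates, and the logistic condition holds for all sufficiently small $\gamma$.
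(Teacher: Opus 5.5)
Your proposal is correct and follows the paper's route: you reduce to the $b$-coordinate via Assumption~\ref{ass:unseen-contexts} and pass to the limit using Theorem~\ref{thm:mir-generalization} plus continuity. You then get positivity of the limit from $\eta>\beta$ (squared loss) and the curvature bound $\tfrac14$ (logistic loss), and you expand $\rho_b\bar b_{\rm log}=-\beta\nabla L_{\rm emp}(\bar b_{\rm log})$ around $\tfrac{\beta}{2\rho_b}\widehat\mu$ for small $\beta/\rho_b$. The differences are cosmetic: your factorization $\beta\,\mu^\top M^{-1}(2M-\beta\Sigma)M^{-1}\mu$ replaces the paper's bound $\bar b_{\rm sq}^\top\Sigma\bar b_{\rm sq}\le(\beta/\eta)\mu^\top\bar b_{\rm sq}$, and your Lipschitz-sigmoid estimate gives an explicit $O(\gamma^2)$ remainder where the paper writes $o(\beta/\rho_b)$.
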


\begin{corollary}[Empirical Signal]
\label{cor:hp-empirical-signal}
Suppose \((S_i,Y_i)\) are independent, \(\|S_i\|_2\le B\), and
\(\mu=\mathbb{E}[YS]\neq 0\). Then
\[
    \mathbb{P}\!\left(\mu^\top\widehat\mu>0\right)
    \ge
    1-\exp\!\left(
        -\frac{n\|\mu\|_2^2}{2B^2}
    \right).
\]
In particular, with high probability, the empirical generalizable component is
aligned with the population predictive direction.
\end{corollary}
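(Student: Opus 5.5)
The plan is to apply Hoeffding's inequality to the scalar average
\[
\mu^\top\widehat\mu=\frac1n\sum_{i=1}^n Z_i,\qquad Z_i:=Y_i\,\mu^\top S_i .
\]
The $Z_i$ are independent because the pairs $(S_i,Y_i)$ are. Each has mean $\mathbb{E}[Z_i]=\mu^\top\mathbb{E}[YS]=\|\mu\|_2^2>0$, using that every pair shares the joint law defining $\mu$ in Definition~\ref{def:context-specific-noise-model}. Each is also bounded: by Cauchy--Schwarz and $|Y_i|=1$, $|Z_i|\le\|\mu\|_2\|S_i\|_2\le B\|\mu\|_2$. So $Z_i$ lies in an interval of length $2B\|\mu\|_2$.

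The bad event $\{\mu^\top\widehat\mu\le 0\}$ is a lower deviation of the average below its mean by at least $t=\|\mu\|_2^2$. One-sided Hoeffding gives
\[
\mathbb{P}\Big(\tfrac1n\textstyle\sum_i Z_i-\|\mu\|_2^2\le -t\Big)\le\exp\!\Big(-\frac{2n^2t^2}{n(2B\|\mu\|_2)^2}\Big)=\exp\!\Big(-\frac{n\|\mu\|_2^2}{2B^2}\Big).
\]
Taking complements yields the stated bound. The alignment remark in the statement follows immediately, since the exponent grows linearly in $n$.

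There is no real obstacle. The one point needing care is the range constant: using $|Z_i|\le B\|\mu\|_2$ (interval length $2B\|\mu\|_2$) rather than a cruder bound is exactly what gives the constant $2B^2$ in the exponent. As a fallback, the sub-Gaussian (Hoeffding lemma) form of the same inequality gives the identical exponent.
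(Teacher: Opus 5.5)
Your proposal is correct and follows essentially the same route as the paper, which applies one-sided Hoeffding to the scalar projection of $\widehat\mu$ onto $\mu$. The only cosmetic difference is that the paper first normalizes to $a=\mu/\|\mu\|_2$, so its summands lie in $[-B,B]$ with mean $\|\mu\|_2$; your unnormalized version (range $2B\|\mu\|_2$, deviation $\|\mu\|_2^2$) gives the identical exponent $n\|\mu\|_2^2/(2B^2)$.
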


The previous results compare clean and MIR training in the limit.
The next result makes the dependence on model size explicit for a simplified
squared-loss objective. This simplified objective replaces the full expected
masked loss by the term that appears when the context-specific component is
hidden while the generalizable component remains visible. It is not the full MIR
objective, but it isolates the part of masking that forces prediction from the
generalizable component.

\begin{theorem}[Increasing Benefit with Growing Model Size]
\label{thm:compute-dependent-gains}
Consider the squared-loss objective
\[
    \widehat J_{{\rm key},{\rm sq}}^{(m)}(w,b)
    :=
    \widehat J_{\ell_{\rm sq},{\rm clean}}^{(m)}(w,b)
    +
    \frac{\beta}{2n}
    \|Y-\mathbf S b\|_2^2,
    \qquad
    \beta>0.
\]
Let \(b_{{\rm key},{\rm sq}}^{(m)}\) be its \(b\)-coordinate minimizer, and define
\[
    \Delta_{{\rm key},m}
    :=
    R_{\rm sq}\!\left(b_{{\rm clean},{\rm sq}}^{(m)}\right)
    -
    R_{\rm sq}\!\left(b_{{\rm key},{\rm sq}}^{(m)}\right).
\]
Assume $
    G_m=mI_n,
    \widehat\mu=\mu,
    \widehat\Sigma=\Sigma,
    \mu\neq 0 $.
Then \(\Delta_{{\rm key},m}\) is strictly increasing in \(m\).
Moreover, let
\[
    \Sigma=U{\rm diag}(\lambda_1,\ldots,\lambda_d)U^\top,
    ~~
    U^\top\mu=(\mu_1,\ldots,\mu_d)^\top .
\]
For each \(\lambda_j>0\), define
\[
    \kappa_j:=\frac{\beta\lambda_j}{\rho_b}.
\]
Then
\[
    \lim_{m\to\infty}\Delta_{{\rm key},m}
    =
    \sum_{\lambda_j>0}
    \frac{\mu_j^2}{\lambda_j}
    \frac{\kappa_j(\kappa_j+2)}{(1+\kappa_j)^2}
    >
    0 .
\]
\end{theorem}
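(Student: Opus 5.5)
The plan is to reduce everything to a one-parameter family of ridge problems in $b$ by profiling out $w$, and then compare validation risks along that family.

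\textbf{Step 1: profile out $w$.} For fixed $b$, write $r:=Y-\mathbf S b$. Both the clean objective $\widehat J_{\ell_{\rm sq},{\rm clean}}^{(m)}$ and the key objective depend on $w$ only through
\[
\frac{1}{2n}\|r-\Phi_m w\|_2^2+\frac{\rho_w}{2n}\|w\|_2^2 .
\]
This is minimized by $w=(\Phi_m^\top\Phi_m+\rho_w I)^{-1}\Phi_m^\top r$. Using the push-through identity, the minimal value is
\[
\frac{1}{2n}\,r^\top\rho_w(G_m+\rho_w I_n)^{-1}r .
\]
With $G_m=mI_n$ this equals $\frac{c_m}{2n}\|r\|_2^2$, where $c_m:=\rho_w/(m+\rho_w)$. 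The sequence $c_m$ is strictly decreasing in $m$ and tends to $0$.

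Hence the $b$-coordinates of the minimizers minimize
\[
\frac{\gamma}{2n}\|Y-\mathbf S b\|_2^2+\frac{\rho_b}{2}\|b\|_2^2,
\]
with $\gamma=c_m$ for clean training and $\gamma=c_m+\beta$ for the key objective. Using $\mathbf S^\top Y/n=\widehat\mu=\mu$ and $\mathbf S^\top\mathbf S/n=\widehat\Sigma=\Sigma$, the minimizer is
\[
b(\gamma)=\gamma(\rho_b I+\gamma\Sigma)^{-1}\mu .
\]

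\textbf{Step 2: risk along the ridge path.} Diagonalize $\Sigma$. The coordinates of $b(\gamma)$ are $b_j=\gamma\mu_j/(\rho_b+\gamma\lambda_j)$. Substituting into $R_{\rm sq}(b)=1-2\mu^\top b+b^\top\Sigma b$ gives
\[
R_{\rm sq}(b(\gamma))=1-\sum_j\mu_j^2 f_j(\gamma),
\qquad
f_j(\gamma)=\frac{\gamma(2\rho_b+\gamma\lambda_j)}{(\rho_b+\gamma\lambda_j)^2}.
\]

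Before simplifying $f_j$, I would dispose of the eigenvalues $\lambda_j=0$. If $u^\top\Sigma u=0$, then $u^\top S=0$ almost surely, so $u^\top\mu=\mathbb E[Y u^\top S]=0$. Thus $\mu_j=0$ whenever $\lambda_j=0$, and only directions with $\lambda_j>0$ contribute.

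For $\lambda_j>0$, set $t=\gamma\lambda_j/\rho_b$. Then
\[
f_j(\gamma)=\frac{1}{\lambda_j}\Bigl(1-\frac{1}{(1+t)^2}\Bigr),
\]
which is a strictly increasing and strictly concave function of $\gamma\ge 0$.

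\textbf{Step 3: monotonicity and the limit.} From Step 2,
\[
\Delta_{{\rm key},m}=\sum_{\lambda_j>0}\mu_j^2\bigl[f_j(c_m+\beta)-f_j(c_m)\bigr].
\]
Strict concavity of $f_j$ means the increment $f_j(c+\beta)-f_j(c)$ is strictly decreasing in $c$. Since $c_m$ is strictly decreasing in $m$, each nonzero term is strictly increasing in $m$.

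At least one term is nonzero. Because $\mu\neq 0$ and $\mu$ has no component on the kernel of $\Sigma$, some $j$ has $\mu_j\neq 0$ and $\lambda_j>0$. This gives strict monotonicity of $\Delta_{{\rm key},m}$.

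For the limit, let $m\to\infty$, so $c_m\to 0$ and $f_j(0)=0$. Each term tends to $\mu_j^2 f_j(\beta)$. With $t=\kappa_j=\beta\lambda_j/\rho_b$,
\[
f_j(\beta)=\frac{1}{\lambda_j}\,\frac{\kappa_j(\kappa_j+2)}{(1+\kappa_j)^2},
\]
which is exactly the stated formula. The limit is positive by the same nonzero-term argument.

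\textbf{Expected obstacle.} The main obstacle is Step 1, which needs a clean closed form for the profiled objective. The rest is routine, with care needed only for the $\lambda_j=0$ directions handled in Step 2.
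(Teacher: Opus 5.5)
Your proof is correct and follows essentially the same route as the paper. Both arguments profile out $w$ to obtain the ridge path $b(\gamma)=\gamma(\rho_b I_d+\gamma\Sigma)^{-1}\mu$, with $\gamma=t_m$ for the clean objective and $\gamma=t_m+\beta$ for the key objective; they then diagonalize $\Sigma$, discard the $\lambda_j=0$ directions, and compare per-direction risk reductions. The differences are only cosmetic: you profile via the push-through identity in $w$-coordinates rather than through $u=\Phi_m w$, and you get monotonicity from strict concavity of $1-(1+t)^{-2}$ instead of the paper's explicit computation of $F_\kappa'(x)<0$, which is the same fact since $F_\kappa(x)=(1+x)^{-2}-(1+x+\kappa)^{-2}$.
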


Theorem~\ref{thm:compute-dependent-gains} illustrates the increasing benefit of
masking the context-specific component as model size increases.
The condition \(G_m=mI_n\) is an idealized isotropic-capacity assumption and is
stronger than needed for the main intuition; it is used here to obtain a simple
closed-form expression and a monotonicity statement. More general Gram matrices
with growing eigenvalues would lead to a similar conclusion, although the
closed-form expression would be less transparent.
The assumptions \(\widehat\mu=\mu\) and \(\widehat\Sigma=\Sigma\) remove
finite-sample realization error from the training sequences from the statement.
They ensure that the empirical generalizable signal in the training set is
aligned with the population signal that determines validation risk. Under these
conditions, any difference between clean training and the masked objective comes
from the use of context-specific memorization. In finite samples, these
assumptions can be interpreted as a population-aligned simplification: when
\(n\) is large, \(\widehat\mu\) and \(\widehat\Sigma\) concentrate around
\(\mu\) and \(\Sigma\), so the same conclusion is stable up to small perturbation
terms. In what follows, we prove the theoretical results in this section.

Throughout the proofs, we write
\[
    u:=\Phi_m w\in\mathbb{R}^n .
\]
Whenever \(G_m=\Phi_m\Phi_m^\top\succ0\), every \(u\in\mathbb{R}^n\) is
representable as \(\Phi_mw\), and the minimum-norm representative satisfies
\[
    \min_{w:\Phi_mw=u}\|w\|_2^2
    =
    u^\top G_m^{-1}u .
\]
Indeed, \(w_\star=\Phi_m^\top G_m^{-1}u\) satisfies
\(\Phi_mw_\star=u\). For any other feasible \(w=w_\star+v\), we have
\(\Phi_mv=0\), and hence
\[
    \langle w_\star,v\rangle
    =
    u^\top G_m^{-1}\Phi_m v
    =
    0.
\]
Thus
\[
    \|w\|_2^2
    =
    \|w_\star\|_2^2+\|v\|_2^2
    \ge
    \|w_\star\|_2^2
    =
    u^\top G_m^{-1}u .
\]
Therefore the optimization over \((w,b)\) is equivalent to optimization over
\((u,b)\), with regularization term
\[
    \frac{\rho_w}{2n}u^\top G_m^{-1}u .
\]

\subsection{Proof of Theorem~\ref{thm:mir-generalization}}

\begin{proof}
We first prove the squared-loss claims. In \((u,b)\)-coordinates, the clean
squared-loss objective is
\[
    \frac1{2n}\|Y-u-\mathbf S b\|_2^2
    +
    \frac{\rho_w}{2n}u^\top G_m^{-1}u
    +
    \frac{\rho_b}{2}\|b\|_2^2 .
\]
For fixed \(b\), the first-order condition in \(u\) is
\[
    u-(Y-\mathbf S b)+\rho_wG_m^{-1}u=0.
\]
Therefore
\[
    u^\star(b)
    =
    (G_m+\rho_w I_n)^{-1}G_m(Y-\mathbf S b).
\]
Profiling out \(u\), the clean squared-loss objective becomes
\[
    \frac1{2n}
    (Y-\mathbf S b)^\top
    T_m
    (Y-\mathbf S b)
    +
    \frac{\rho_b}{2}\|b\|_2^2,
    ~~
    T_m:=\rho_w(G_m+\rho_w I_n)^{-1}.
\]
Differentiating with respect to \(b\) gives
\[
    b_{{\rm clean},{\rm sq}}^{(m)}
    =
    \left(
        \rho_b I_d+\frac1n\mathbf S^\top T_m\mathbf S
    \right)^{-1}
    \frac1n\mathbf S^\top T_mY .
\]
The eigenvalues of \(T_m\) are
\[
    \frac{\rho_w}{\lambda_j(G_m)+\rho_w},
\]
and hence
\[
    \|T_m\|_{\rm op}
    \le
    \frac{\rho_w}{a_m+\rho_w}
    \to0 .
\]
It follows that
\[
    b_{{\rm clean},{\rm sq}}^{(m)}\to0 .
\]

We next consider MIR with squared loss. Let
\[
    s_0:=\mathbb{E}[r^2].
\]
Up to the additive constant \(\lambda s_0(2n)^{-1}\|Y\|_2^2\), the expected
four-case squared-loss objective is
\[
\begin{aligned}
    &
    \alpha\frac1{2n}\|Y-u-\mathbf S b\|_2^2
    +
    \beta\frac1{2n}\|Y-\mathbf S b\|_2^2
    +
    \beta\frac1{2n}\|Y-u\|_2^2
    \\
    &\hspace{3em}
    +
    \frac{\rho_w}{2n}u^\top G_m^{-1}u
    +
    \frac{\rho_b}{2}\|b\|_2^2 .
\end{aligned}
\]
The first-order condition in \(u\) is
\[
    \alpha(u+\mathbf S b-Y)+\beta(u-Y)+\rho_wG_m^{-1}u=0.
\]
Since \(\delta=\alpha+\beta\), this gives
\[
    u^\star(b)
    =
    M_m(\delta Y-\alpha\mathbf S b),
    ~~
    M_m:=(\delta I_n+\rho_wG_m^{-1})^{-1}.
\]
The first-order condition in \(b\) is
\[
    \frac{\alpha}{n}\mathbf S^\top(u+\mathbf S b-Y)
    +
    \frac{\beta}{n}\mathbf S^\top(\mathbf S b-Y)
    +
    \rho_b b
    =
    0.
\]
Equivalently,
\[
    \frac{\alpha}{n}\mathbf S^\top u
    +
    (\delta\widehat\Sigma+\rho_b I_d)b
    -
    \delta\widehat\mu
    =
    0 .
\]
Substituting \(u^\star(b)=M_m(\delta Y-\alpha\mathbf S b)\) yields
\[
    b_{{\rm MIR},{\rm sq}}^{(m)}
    =
    \left(
        \rho_b I_d
        +
        \delta\widehat\Sigma
        -
        \frac{\alpha^2}{n}\mathbf S^\top M_m\mathbf S
    \right)^{-1}
    \left(
        \delta\widehat\mu
        -
        \frac{\alpha\delta}{n}\mathbf S^\top M_mY
    \right).
\]
Because \(\|G_m^{-1}\|_{\rm op}\to0\), we have
\[
    M_m\to\delta^{-1}I_n
\]
in operator norm. Therefore
\[
    \frac1n\mathbf S^\top M_mY\to\frac1\delta\widehat\mu,
    ~~
    \frac1n\mathbf S^\top M_m\mathbf S\to\frac1\delta\widehat\Sigma .
\]
It follows that
\[
    b_{{\rm MIR},{\rm sq}}^{(m)}
    \to
    \beta(\rho_b I_d+\eta\widehat\Sigma)^{-1}\widehat\mu
    =
    \bar b_{\rm sq}.
\]
If \(\widehat\mu\neq0\), then
\[
    \widehat\mu^\top\bar b_{\rm sq}
    =
    \beta
    \widehat\mu^\top
    (\rho_b I_d+\eta\widehat\Sigma)^{-1}
    \widehat\mu
    >
    0,
\]
because \(\rho_b I_d+\eta\widehat\Sigma\succ0\). Hence
\(\bar b_{\rm sq}\neq0\).

We now prove the logistic-loss claims. Let
\[
    g(z):=\log(1+e^{-z}).
\]
Choose \(\tau_m\to\infty\) such that \(\tau_m^2/a_m\to0\). For clean logistic
training, evaluate the objective at \(u=\tau_mY\) and \(b=0\). Then
\[
    g(Y_iu_i)=g(\tau_m)\to0,
\]
and
\[
    \frac{\rho_w}{2n}u^\top G_m^{-1}u
    \le
    \frac{\rho_w}{2}\frac{\tau_m^2}{a_m}
    \to0 .
\]
Hence the minimum clean logistic objective converges to zero. Since the
objective is nonnegative and contains the term
\(\rho_b\|b\|_2^2/2\), every clean logistic minimizer satisfies
\[
    b_{{\rm clean},{\rm log}}^{(m)}\to0 .
\]

For MIR logistic training, define
\[
    F_{\rm log}(b)
    :=
    \beta
    \frac1n\sum_{i=1}^n
    g(Y_i b^\top S_i)
    +
    \frac{\rho_b}{2}\|b\|_2^2 .
\]
This function is strongly convex and therefore has a unique minimizer
\(\bar b_{\rm log}\). The expected four-case MIR logistic objective in
\((u,b)\)-coordinates is
\[
\begin{aligned}
     \widehat J_{\ell_{\rm log},{\rm MIR}}^{(m)}(u,b):= &~
    \alpha\frac1n\sum_{i=1}^n g(Y_i(u_i+b^\top S_i))
    +
    \beta\frac1n\sum_{i=1}^n g(Y_i b^\top S_i)
    +
    \beta\frac1n\sum_{i=1}^n g(Y_iu_i)
    \\
    &\hspace{3em}
    +
    \lambda s_0\log2
    +
    \frac{\rho_w}{2n}u^\top G_m^{-1}u
    +
    \frac{\rho_b}{2}\|b\|_2^2 .
\end{aligned}
\]
All terms except \(F_{\rm log}(b)\) and the constant \(\lambda s_0\log2\) are
nonnegative. Hence, for every \((u,b)\),
\[
    \widehat J_{\ell_{\rm log},{\rm MIR}}^{(m)}(u,b)
    \ge
    F_{\rm log}(b)+\lambda s_0\log2 .
\]
Now evaluate the MIR logistic objective at \(b=\bar b_{\rm log}\) and
\(u=\tau_mY\). Then the context-specific-only margin is
\(Y_i u_i=\tau_m\), while the margin satisfies
\[
    Y_i(u_i+\bar b_{\rm log}^\top S_i)
    =
    \tau_m+Y_i\bar b_{\rm log}^\top S_i
    \ge
    \tau_m-B\|\bar b_{\rm log}\|_2
    \to\infty .
\]
Thus the corresponding logistic losses vanish, and the context-specific
regularization again tends to zero as we choose $\tau_m$ such that \(\tau_m^2/a_m\to0\). Therefore
\[
    \inf_{u,b}\widehat J_{\ell_{\rm log},{\rm MIR}}^{(m)}(u,b)
    \le
    F_{\rm log}(\bar b_{\rm log})+\lambda s_0\log2+o(1).
\]
Combining the lower and upper bounds gives
\[
    F_{\rm log}\!\left(b_{{\rm MIR},{\rm log}}^{(m)}\right)
    \le
    F_{\rm log}(\bar b_{\rm log})+o(1).
\]
By strong convexity of \(F_{\rm log}\),
\[
    b_{{\rm MIR},{\rm log}}^{(m)}\to\bar b_{\rm log}.
\]

Finally,
\[
    \nabla F_{\rm log}(0)
    =
    -\frac{\beta}{2}\widehat\mu .
\]
Thus, if \(\widehat\mu\neq0\), zero is not the minimizer and
\(\bar b_{\rm log}\neq0\). At the minimizer \(\bar b_{\rm log}\), the first-order condition for
\(F_{\rm log}\) gives
\[
    0
    =
    \beta \frac1n\sum_{i=1}^n
    g'(Y_i \bar b_{\rm log}^\top S_i)Y_iS_i
    +
    \rho_b\bar b_{\rm log}.
\]
Equivalently,
\[
    \rho_b\bar b_{\rm log}
    =
    -\beta \frac1n\sum_{i=1}^n
    g'(Y_i \bar b_{\rm log}^\top S_i)Y_iS_i .
\]
Taking Euclidean norms and using the triangle inequality,
\[
\begin{aligned}
    \rho_b\|\bar b_{\rm log}\|_2
    &\le
    \beta \frac1n\sum_{i=1}^n
    \left|g'(Y_i \bar b_{\rm log}^\top S_i)\right|
    |Y_i|\|S_i\|_2  \\
    &\le
    \beta B,
\end{aligned}
\]
because \(|Y_i|=1\), \(\|S_i\|_2\le B\), and \(|g'(z)|\le1\). Therefore
\[
    \|\bar b_{\rm log}\|_2
    \le
    \frac{\beta B}{\rho_b}.
\]

\end{proof}

\subsection{Proof of Theorem~\ref{thm:mir-improves-validation-risk}}

\begin{proof}
By Assumption~\ref{ass:unseen-contexts}, validation prediction scores are
\(b^\top S\). Therefore validation risks depend only on the coefficient \(b\) of
the generalizable component.

For squared loss,
\[
    R_{\rm sq}(b)-R_{\rm sq}(0)
    =
    -2\mu^\top b+b^\top\Sigma b .
\]
Thus \(R_{\rm sq}(b)<R_{\rm sq}(0)\) whenever
\[
    2\mu^\top b-b^\top\Sigma b>0 .
\]
By Theorem~\ref{thm:mir-generalization},
\[
    b_{{\rm clean},{\rm sq}}^{(m)}\to0,
    ~~
    b_{{\rm MIR},{\rm sq}}^{(m)}\to\bar b_{\rm sq}.
\]
If
\[
    2\mu^\top\bar b_{\rm sq}
    -
    \bar b_{\rm sq}^\top\Sigma\bar b_{\rm sq}
    >
    0,
\]
then continuity gives
\[
    R_{\rm sq}\!\left(b_{{\rm MIR},{\rm sq}}^{(m)}\right)
    <
    R_{\rm sq}\!\left(b_{{\rm clean},{\rm sq}}^{(m)}\right)
\]
for all sufficiently large \(m\).

We next show that the squared-loss condition holds automatically when
\(\widehat\mu=\mu\) and \(\widehat\Sigma=\Sigma\). In this case,
\[
    \bar b_{\rm sq}
    =
    \beta(\rho_b I_d+\eta\Sigma)^{-1}\mu .
\]
Let
\[
    A:=(\rho_b I_d+\eta\Sigma)^{-1}.
\]
Since \(A\) and \(\Sigma\) commute,
\[
    \bar b_{\rm sq}^\top\Sigma\bar b_{\rm sq}
    =
    \beta^2\mu^\top A\Sigma A\mu
    \le
    \frac{\beta}{\eta}\,
    \beta\mu^\top A\mu
    =
    \frac{\beta}{\eta}\,
    \mu^\top\bar b_{\rm sq}.
\]
Because
\[
    \eta
    =
    \frac{\beta(\delta+\alpha)}{\delta}
    >
    \beta,
\]
we have \(\beta/\eta<1\). Also,
\[
    \mu^\top\bar b_{\rm sq}
    =
    \beta\mu^\top A\mu
    >
    0,
\]
since \(\mu\neq0\) and \(A\succ0\). Therefore
\[
    2\mu^\top\bar b_{\rm sq}
    -
    \bar b_{\rm sq}^\top\Sigma\bar b_{\rm sq}
    >
    \mu^\top\bar b_{\rm sq}
    >
    0 .
\]

For logistic loss,
\[
    \nabla R_{\rm log}(0)
    =
    -\frac12\mu .
\]
Moreover, with \(\sigma(t)=(1+e^{-t})^{-1}\), the Hessian satisfies
\[
    \nabla^2 R_{\rm log}(b)
    =
    \mathbb{E}
    \left[
        \sigma(Yb^\top S)\sigma(-Yb^\top S)SS^\top
    \right]
    \preceq
    \frac14\mathbb{E}[SS^\top]
    \preceq
    \frac{B^2}{4}I_d .
\]
Hence Taylor's expansion gives
\[
    R_{\rm log}(b)
    \le
    R_{\rm log}(0)
    -
    \frac12\mu^\top b
    +
    \frac{B^2}{8}\|b\|_2^2 .
\]
Thus \(R_{\rm log}(b)<R_{\rm log}(0)\) whenever
\[
    \mu^\top b>\frac{B^2}{4}\|b\|_2^2 .
\]
Applying this condition at \(b=\bar b_{\rm log}\), and using
Theorem~\ref{thm:mir-generalization}, gives
\[
    R_{\rm log}\!\left(b_{{\rm MIR},{\rm log}}^{(m)}\right)
    <
    R_{\rm log}\!\left(b_{{\rm clean},{\rm log}}^{(m)}\right)
\]
for all sufficiently large \(m\).

It remains to justify the stated sufficient condition for logistic loss. Let
\[
    L_{\rm emp}(b)
    :=
    \frac1n\sum_{i=1}^n g(Y_i b^\top S_i),
    ~~
    g(z):=\log(1+e^{-z}).
\]
The minimizer \(\bar b_{\rm log}\) satisfies
\[
    \rho_b\bar b_{\rm log}
    =
    -\beta\nabla L_{\rm emp}(\bar b_{\rm log}).
\]
Since \(\|S_i\|_2\le B\), the gradient \(\nabla L_{\rm emp}\) is Lipschitz on
\(\mathbb{R}^d\), and
\[
    \nabla L_{\rm emp}(0)
    =
    -\frac12\widehat\mu .
\]
Also, from the optimality equation and \(\|\nabla L_{\rm emp}(b)\|_2\le B\),
\[
    \|\bar b_{\rm log}\|_2
    \le
    \frac{\beta B}{\rho_b}.
\]
Therefore, as \(\beta/\rho_b\to0\),
\[
    \bar b_{\rm log}
    =
    \frac{\beta}{2\rho_b}\widehat\mu
    +
    o(\beta/\rho_b).
\]
If \(\mu^\top\widehat\mu>0\), then
\[
    \mu^\top\bar b_{\rm log}
    =
    \frac{\beta}{2\rho_b}\mu^\top\widehat\mu
    +
    o(\beta/\rho_b),
\]
whereas
\[
    \|\bar b_{\rm log}\|_2^2
    =
    O((\beta/\rho_b)^2).
\]
Hence, for sufficiently small \(\beta/\rho_b\),
\[
    \mu^\top\bar b_{\rm log}
    >
    \frac{B^2}{4}\|\bar b_{\rm log}\|_2^2 .
\]

The asymptotic gain results follow from the same convergence and continuity:
\[
    \Delta_{{\rm sq},m}
    \to
    R_{\rm sq}(0)-R_{\rm sq}(\bar b_{\rm sq})
    >
    0,
\]
and
\[
    \Delta_{{\rm log},m}
    \to
    R_{\rm log}(0)-R_{\rm log}(\bar b_{\rm log})
    >
    0 .
\]
\end{proof}

\subsection{Proof of Corollary~\ref{cor:hp-empirical-signal}}

\begin{proof}
Let
\[
    a:=\frac{\mu}{\|\mu\|_2}.
\]
Then
\[
    a^\top\widehat\mu
    =
    \frac1n\sum_{i=1}^n Y_i a^\top S_i .
\]
The summands satisfy
\[
    |Y_i a^\top S_i|\le B,
    ~~
    \mathbb{E}[Y_i a^\top S_i]
    =
    a^\top\mu
    =
    \|\mu\|_2 .
\]
Hoeffding's inequality \citep{vershynin2018high} gives
\[
    \mathbb{P}(a^\top\widehat\mu\le0)
     = \mathbb{P} (a^{\top}\widehat\mu - \|\mu\|_2 \le  - \|\mu\|_2  )  \le
    \exp\!\left(
        -\frac{n\|\mu\|_2^2}{2B^2}
    \right).
\]
Since \(a^\top\widehat\mu>0\) is equivalent to
\(\mu^\top\widehat\mu>0\), the result follows.
\end{proof}

\subsection{Proof of Theorem~\ref{thm:compute-dependent-gains}}

\begin{proof}
Under \(G_m=mI_n\),
\[
    T_m
    =
    \rho_w(G_m+\rho_w I_n)^{-1}
    =
    \frac{\rho_w}{m+\rho_w}I_n .
\]
Write
\[
    t_m:=\frac{\rho_w}{m+\rho_w}.
\]
Using the profiled squared-loss formula from the proof of
Theorem~\ref{thm:mir-generalization}, and using
\(\widehat\mu=\mu\) and \(\widehat\Sigma=\Sigma\), we obtain
\[
    b_{{\rm clean},{\rm sq}}^{(m)}
    =
    t_m(\rho_b I_d+t_m\Sigma)^{-1}\mu .
\]
For the key-ablating objective, profiling out \(u\) gives
\[
    \frac1{2n}
    (Y-\mathbf S b)^\top T_m(Y-\mathbf S b)
    +
    \frac{\beta}{2n}\|Y-\mathbf S b\|_2^2
    +
    \frac{\rho_b}{2}\|b\|_2^2 .
\]
Differentiating with respect to \(b\) yields
\[
    b_{{\rm key},{\rm sq}}^{(m)}
    =
    (t_m+\beta)
    (\rho_b I_d+(t_m+\beta)\Sigma)^{-1}\mu .
\]

Let
\[
    \Sigma=U{\rm diag}(\lambda_1,\ldots,\lambda_d)U^\top,
    ~~
    U^\top\mu=(\mu_1,\ldots,\mu_d)^\top .
\]
If \(\lambda_j=0\), then \(\mu_j=0\) since $\mu = \EE[YS]$ and $\Sigma=\EE[SS^{\top}]$. Indeed, \(\lambda_j=0\) implies that the
corresponding projection of \(S\) is zero almost surely, and hence its
correlation with \(Y\) is also zero. Thus only terms with \(\lambda_j>0\)
contribute to the risk.

For \(\alpha_0>0\), define
\[
    b(\alpha_0):=
    \alpha_0(\rho_b I_d+\alpha_0\Sigma)^{-1}\mu .
\]
In the eigenbasis of \(\Sigma\), the \(j\)-th coordinate is, for
\(\lambda_j>0\),
\[
    b_j(\alpha_0)
    =
    \frac{\mu_j}{\lambda_j}
    \frac{\alpha_0\lambda_j/\rho_b}{1+\alpha_0\lambda_j/\rho_b}.
\]
Let
\[
    s(x):=\frac{x}{1+x}.
\]
The reduction in squared risk from using \(b(\alpha_0)\) instead of \(0\) is
\[
\begin{aligned}
    R_{\rm sq}(0)-R_{\rm sq}(b(\alpha_0))
    &=
    2\mu^\top b(\alpha_0)-b(\alpha_0)^\top\Sigma b(\alpha_0).
\end{aligned}
\]
Write \(U^\top b(\alpha_0)=(b_1(\alpha_0),\ldots,b_d(\alpha_0))^\top\). In the
eigenbasis of \(\Sigma\),
\[
    \mu^\top b(\alpha_0)
    =
    \sum_{j=1}^d \mu_j b_j(\alpha_0),
    \qquad
    b(\alpha_0)^\top\Sigma b(\alpha_0)
    =
    \sum_{j=1}^d \lambda_j b_j(\alpha_0)^2 .
\]
Therefore
\[
\begin{aligned}
    R_{\rm sq}(0)-R_{\rm sq}(b(\alpha_0))
    &=
    \sum_{j=1}^d
    \left\{
        2\mu_j b_j(\alpha_0)
        -
        \lambda_j b_j(\alpha_0)^2
    \right\}.
\end{aligned}
\]
If \(\lambda_j=0\), then \(\mu_j=0\), so the corresponding term is zero. Thus
only the terms with \(\lambda_j>0\) remain. For such \(j\),
\[
    b_j(\alpha_0)
    =
    \frac{\mu_j}{\lambda_j}
    s\!\left(\frac{\alpha_0\lambda_j}{\rho_b}\right),
    \qquad
    s(x):=\frac{x}{1+x}.
\]
Substituting this expression gives
\[
\begin{aligned}
    2\mu_j b_j(\alpha_0)
    -
    \lambda_j b_j(\alpha_0)^2
    &=
    2\mu_j
    \frac{\mu_j}{\lambda_j}
    s\!\left(\frac{\alpha_0\lambda_j}{\rho_b}\right)
    -
    \lambda_j
    \left[
        \frac{\mu_j}{\lambda_j}
        s\!\left(\frac{\alpha_0\lambda_j}{\rho_b}\right)
    \right]^2 \\
    &=
    \frac{2\mu_j^2}{\lambda_j}
    s\!\left(\frac{\alpha_0\lambda_j}{\rho_b}\right)
    -
    \frac{\mu_j^2}{\lambda_j}
    s^2\!\left(\frac{\alpha_0\lambda_j}{\rho_b}\right) \\
    &=
    \frac{\mu_j^2}{\lambda_j}
    s\!\left(\frac{\alpha_0\lambda_j}{\rho_b}\right)
    \left[
        2-
        s\!\left(\frac{\alpha_0\lambda_j}{\rho_b}\right)
    \right].
\end{aligned}
\]
Hence
\[
    R_{\rm sq}(0)-R_{\rm sq}(b(\alpha_0))
    =
    \sum_{\lambda_j>0}
    \frac{\mu_j^2}{\lambda_j}
    s\!\left(\frac{\alpha_0\lambda_j}{\rho_b}\right)
    \left[
        2-
        s\!\left(\frac{\alpha_0\lambda_j}{\rho_b}\right)
    \right].
\]
Since
\[
    b_{{\rm clean},{\rm sq}}^{(m)}=b(t_m),
    ~~
    b_{{\rm key},{\rm sq}}^{(m)}=b(t_m+\beta),
\]
the gain \(\Delta_{{\rm key},m}\) is a sum over \(\lambda_j>0\) of terms of the
form
\[
    \frac{\mu_j^2}{\lambda_j}
    \left[
        s(x+\kappa_j)\{2-s(x+\kappa_j)\}
        -
        s(x)\{2-s(x)\}
    \right],
\]
where
\[
    x=\frac{t_m\lambda_j}{\rho_b},
    ~~
    \kappa_j=\frac{\beta\lambda_j}{\rho_b}>0 .
\]
Note that
\[
    s(x+\kappa)-s(x)
    =
    \frac{\kappa}{(1+x)(1+x+\kappa)}
\]
and
\[
    2-s(x+\kappa)-s(x)
    =
    \frac{\kappa+2x+2}{(1+x)(1+x+\kappa)} .
\]
We have that each nonzero spectral contribution equals
\[
    \frac{\mu_j^2}{\lambda_j}
    \frac{
        \kappa_j(\kappa_j+2x+2)
    }{
        (1+x)^2(1+x+\kappa_j)^2
    } .
\]
For fixed \(\kappa>0\), define
\[
    F_\kappa(x)
    :=
    \frac{
        \kappa(\kappa+2x+2)
    }{
        (1+x)^2(1+x+\kappa)^2
    } .
\]
Then
\[
    F_\kappa'(x)
    =
    -
    \frac{
        2\kappa
        \left(
            \kappa^2+3\kappa x+3\kappa+3x^2+6x+3
        \right)
    }{
        (1+x)^3(1+x+\kappa)^3
    }
    <0 .
\]
Since
\[
    t_m=\frac{\rho_w}{m+\rho_w}
\]
is strictly decreasing in \(m\), each nonzero spectral contribution to
\(\Delta_{{\rm key},m}\) is strictly increasing in \(m\). Because
\(\mu\neq0\), at least one such contribution is nonzero. Hence
\(\Delta_{{\rm key},m}\) is strictly increasing in \(m\).

Finally, \(t_m\to0\), so \(x\to0\) for every \(j\), and
\[
    \lim_{m\to\infty}\Delta_{{\rm key},m}
    =
    \sum_{\lambda_j>0}
    \frac{\mu_j^2}{\lambda_j}
    \frac{\kappa_j(\kappa_j+2)}{(1+\kappa_j)^2}
    >
    0 .
\]
\end{proof}

\section{Derivation of Quanta Scaling Law}
\label{app:quanta-derivation}
To make the paper self-contained and easier for readers to follow, this section summarizes
the Quanta argument from \citet{quanta-scalinglaw} that we use in our paper. The background is skill learning: next-token
prediction is assumed to require a large collection of discrete predictive
skills, called \emph{quanta}. A model either learns a quantum or it does not,
and scaling improves performance by allowing the model to learn more quanta in
descending order of usefulness.

\textbf{Loss as a Function of Learned Quanta.}
Index quanta by decreasing use frequency. Let \(p_k\) be the probability that
the \(k\)-th quantum is needed on a randomly drawn token. The Quanta model
assumes a Zipf tail
\begin{equation}
    p_k = \frac{1}{Z} k^{-(1+\alpha)},
    \qquad
    Z = \sum_{k=1}^{\infty} k^{-(1+\alpha)},
    \qquad
    \alpha > 0.
\end{equation}
In the simplest monogenic version of the model, each token depends mainly on
one quantum. Suppose learning a quantum lowers the loss on those tokens from
\(b\) to \(a\), with \(b>a\). If the model has learned the first \(n\) quanta,
its expected loss is
\begin{align}
    L(n)
    = \sum_{k=1}^{n} a p_k
    +
    \sum_{k=n+1}^{\infty} b p_k =
    a + (b-a)\sum_{k=n+1}^{\infty} p_k \approx
    a + \frac{b-a}{\alpha Z} n^{-\alpha},
\end{align}
where the last line uses the standard tail approximation
\[\sum_{k=n+1}^{\infty} k^{-(1+\alpha)} \approx \int_n^\infty x^{-(1+\alpha)}dx
= n^{-\alpha}/\alpha.\]
Therefore
\begin{equation}
    L(n) \approx E + C n^{-\alpha},
    \label{eq:app-quanta-n}
\end{equation}
where \(E=a\) is the irreducible loss floor and \(C>0\) absorbs the remaining
constants. This is the key step: a Zipf distribution over skill frequencies
induces a power law in the loss as a function of the number of learned skills.

\textbf{Parameter Scaling.}
If data is abundant, then the bottleneck is model capacity. Assume each quantum
requires approximately \(c_N\) parameters to represent. A model with \(N\)
parameters can then learn
\begin{equation}
    n_N \approx \frac{N}{c_N}
\end{equation}
quanta. Substituting this into Eq.~\eqref{eq:app-quanta-n} gives
\begin{equation}
    L(N,\infty)
    \approx
    E + A_N N^{-\alpha},
\end{equation}
so the parameter-scaling exponent is
\begin{equation}
    \alpha_N = \alpha.
\end{equation}

\textbf{Data Scaling.}
In the data-constrained multi-epoch regime, repeated passes over the same corpus
do not create new rare skills. The relevant resource is the number of unique
tokens \(U\). Assume that learning the \(k\)-th quantum requires at least
\(\tau\) tokens in the unique dataset that use that quantum. Then the last
quantum that can be learned, denoted \(n_U\), satisfies
\begin{equation}
    U p_{n_U} \approx \tau.
\end{equation}
Using \(p_k \propto k^{-(1+\alpha)}\), we obtain
\begin{equation}
    n_U
    \approx
    \left(\frac{U}{Z\tau}\right)^{1/(1+\alpha)}.
\end{equation}
Substituting again into Eq.~\eqref{eq:app-quanta-n} yields
\begin{equation}
    L(\infty,U)
    \approx
    E + A_U U^{-\alpha/(1+\alpha)},
\end{equation}
so the data-scaling exponent is
\begin{equation}
    \alpha_U = \frac{\alpha}{1+\alpha}.
\end{equation}
This is why the data exponent is smaller than the parameter exponent in the
basic Quanta picture.

\textbf{A Quanta-Motivated Joint Law.}
The single-axis derivations above do not uniquely determine a joint
\((N,U)\) law, but they do imply that the number of learned quanta is jointly
limited by parameter capacity and unique-data coverage. A hard bottleneck view
would write
\begin{equation}
    n(N,U)
    \lesssim
    \min\left\{
        \gamma_N N,\,
        \gamma_U U^{1/(1+\alpha)}
    \right\},
\end{equation}
for some constants \(\gamma_N,\gamma_U>0\). Equivalently,
\begin{equation}
    n(N,U)^{-1}
    \gtrsim
    \max\left\{
        \frac{1}{\gamma_N N},\,
        \frac{1}{\gamma_U U^{1/(1+\alpha)}}
    \right\}.
\end{equation}
For fitting, it is convenient to replace this hard maximum by a smooth additive
envelope in inverse-skill space,
\begin{equation}
    n(N,U)^{-1}
    \approx
    \frac{A'}{N}
    +
    \frac{B'}{U^{1/(1+\alpha)}}.
\end{equation}
Substituting this into \(L-E \propto n^{-\alpha}\) gives the Quanta-motivated
joint law
\begin{equation}
    L_{\mathrm{Q}}(N,U)
    =
    E+
    \left(
        \frac{A}{N}
        +
        \frac{B}{U^{1/(1+\alpha)}}
    \right)^{\alpha},
    \label{eq:app-quanta-derived}
\end{equation}
which is exactly the form used in Eq.~\eqref{eq:app-quanta}. This coupling
should be read as a smooth interpolation motivated by the Quanta asymptotes. It
is attractive because it recovers both derived limits:
\begin{align}
    L_{\mathrm{Q}}(N,\infty)
    &= E + A^{\alpha} N^{-\alpha}, \\
    L_{\mathrm{Q}}(\infty,U)
    &= E + B^{\alpha} U^{-\alpha/(1+\alpha)}.
\end{align}
Thus, the Quanta picture explains why the benefit of increasing model size
should depend on the available unique data: both resources control the number of
skills that can be learned, and the loss is governed by that shared latent
quantity.


\end{document}

%% file: acknowledgements.tex
\section{Acknowledgments}
We thank Eric Czech, Hrayr Harutyunyan, and Samip Dahal for helpful discussions and their invaluable feedback. This work was supported in part by funding from the DARPA AIQ program, the Office of Naval Research under grant N00014-23-1-2590, the National Science Foundation under grant No. 2310831, No. 2428059, No. 2435696, No. 2440954, a Michigan Institute for Data Science Propelling Original Data Science (PODS) grant, Two Sigma Investments LP, and  LG Management Development Institute AI Research.